\newtheorem{theorem}{Theorem}
\begin{document}
%
\title{SPLBoost: An Improved Robust Boosting Algorithm Based on Self-paced Learning}
%
%
%

\author{ Kaidong Wang,
         Yao Wang,~
         Qian Zhao,~Deyu Meng,~\IEEEmembership{Member,~IEEE,}
        and Zongben Xu
\thanks{Kaidong Wang,~Yao Wang,~Qian Zhao,~Deyu Meng and Zongben Xu are with School of Mathematics and Statistics, Xi'an Jiaotong University, Xi'an 710049,  P.R. China.   E-mails: wangkd13@gmail.com,~yao.s.wang@gmail.com,~timmy.zhaoqian@gmail.com,
~dymeng@mail.xjtu.edu.cn,~zbxu@mail.xjtu.edu.cn. }
\thanks{Yao Wang is the corresponding author.}
}

%
%

\markboth{Journal of \LaTeX\ Class Files,~Vol.~14, No.~8, August~2015}%
{Shell \MakeLowercase{\textit{et al.}}: Bare Demo of IEEEtran.cls for IEEE Journals}
%



\maketitle

\begin{abstract}
It is known that Boosting can be interpreted as a gradient descent technique to minimize an underlying loss function. Specifically, the underlying loss being minimized by the traditional AdaBoost is the exponential loss, which is proved to be very sensitive to random noise/outliers. Therefore, several Boosting algorithms, e.g., LogitBoost and SavageBoost, have been proposed to improve the robustness of AdaBoost by replacing the exponential loss with some designed robust loss functions. In this work, we present a new way to robustify AdaBoost, i.e., incorporating the robust learning idea of Self-paced Learning (SPL) into Boosting framework. Specifically, we design a new robust Boosting algorithm based on SPL regime, i.e., SPLBoost, which can be easily implemented by slightly modifying off-the-shelf Boosting packages. Extensive experiments and a theoretical characterization are also carried out to illustrate the merits of the proposed SPLBoost.
\end{abstract}

\begin{IEEEkeywords}
AdaBoost, loss function, robustness, self-paced learning, majorization minimization.
\end{IEEEkeywords}

%
\IEEEpeerreviewmaketitle

\section{Introduction}
%
%
%
%
\IEEEPARstart{F}{or} a classification or regression problem, there are two natural ways to deal with it: one is that we can train a strong learning machine directly from the training set with a variety of machine learning methods, and expect that the obtained learning machine could have a satisfactory prediction accuracy; the other is that we can train a number of weak learners with slightly better accuracies than randomly guessing, then put them together in a specific way to get a strong learner that could have a better accuracy than those weak learners. The latter is the basic idea of ensemble learning. As an important and excellent ensemble learning framework, boosting \cite{freund1995desicion, schapire2012boosting, meir2003introduction, tsao2007stochastic, friedman2001greedy} has been widely applied in many machine learning problems because of its simplicity and good performance. 

AdaBoost \cite{freund1996experiments, schapire1999improved} is one of the most commonly-used boosting algorithms, and it has proven to be effective and easy to implement in various classification problems. Given training data $\left( x_1,y_1\right) $,\ldots,$\left( x_n,y_n\right) $, where $ x_i $ is a vector-valued feature and $ y_i\in \left\lbrace 1,-1\right\rbrace $. It is known that AdaBoost can produce a strong learner $ F\left( x\right) =\sum_{1}^{T}\alpha _tf_t\left( x\right)  $, where $ f_t\left( x\right) $ is the weak learner trained on weighted training data in $ t $th step and $ \alpha _t $ is a constant calculated based on the classification accuracy of $ f_t\left( x\right) $. Then we can predict the label of a new sample by $ sign\left( F\left( x\right) \right)  $. In particular, AdaBoost gives a weight initialized to $ 1/n $ to every training sample and adjust it in every step such that the weights of the correctly classified samples by the current learner are decreased while the weights of misclassified samples are increased. This reweighting way gives rise to that AdaBoost always pays more attention to the samples which are hard to classify and ignores the easy-to-classify samples to some extent when training the next weak learner.

Many practical applications have demonstrated the success of AdaBoost in producing satisfactory and accurate strong classifiers \cite{viola2001rapid, viola2004robust, lv2006recognition, bergstra2006aggregate, chan2008evaluation}. And it is interesting that in many cases the test error seems to decrease consistently and then level off rather than gradually increase as more weak learners added, which means that it is not prone to overfit and as a result, it is not difficult for AdaBoost to determine the number of weak learners. In spite of this, the classifiers produced by AdaBoost are not always acceptable, especially when the training samples are corrupted by outliers \cite{frenay2014classification, khoshgoftaar2011comparing, cao2012noise, brodley1999identifying}. It has been shown that AdaBoost algorithm builds an additive logistic regression model for minimizing the expected risk based on the exponential loss function $ \varphi \left( yf\left( x\right) \right) =\exp \left( -yf\left( x\right) \right)  $ \cite{friedman2000additive, shen2013fully}. It is easy to see that this loss increase rapidly with the increase of the magnitude of the negative margin $-yf\left( x\right)$, which means that it will significantly enlarge the functions of those large noises in training since they will have a large negative value of $yf\left( x\right)$. This naturally degenerates the performance of the approach in the presence of heavy noises/outliers.

Aiming at remedying the poor robustness issue of AdaBoost, many studies have been conducted to improve its performance for dealing with data corrupted by outliers. The mainly adopted methodology is to design a new robust loss function in boosting to be optimized, and then use gradient descent like strategies to resolve this new optimization problem. Such robust loss needs to be designed to increase evidently slower when the magnitude of $-yf\left( x\right)$ becomes larger, so as to suppress the effect of large noises and outliers. Although those robust boosting algorithms are proven to be able to have better performance than AdaBoost for the training data with outliers, there are natural defects for the idea to directly design and optimize new loss functions. Although easy to optimize, convex loss functions are not robust enough to eliminate the impact of outliers. The fact proved that non-convex loss functions can often have better performances than convex loss functions. However, non-convex loss functions can produce non-convex optimization problems which is difficult to solve and get stable solutions. In this paper, taking advantage of the robustness of self-paced learning regime, we come up with a new thought for robust boosting algorithms. Our main contributions can be summarized as follows:

Firstly, we propose a new robust boosting algorithm named by SPLBoost, which incorporates the self-paced learning into the AdaBoost framework. As mentioned above,  it is not always a good idea to improve the robustness of AdaBoost by directly modifying the loss function, which motivates us to utilize another efficient way to achieve the same goal. It has been recently shown that self-paced learning is a effective robust learning regime and has achieved satisfactory results in deal with many machine learning and computer version problems. A basic idea of self-paced learning is to give weights in $ \left[ 0,1\right]  $ to training samples so that weights of the samples with larger losses are smaller and weights can be zero when the corresponding losses are large enough. By combining self-paced learning with AdaBoost, SPLBoost is proven to be able to improve the robustness of AdaBoost. 

Secondly, the proposed SPLBoost algorithm can be very easily embedded into any off-the-shelf AdaBoost package. Besides, more SPLBoost variations can be easily designed by integrating it directly to more boosting packages, like LogitBoost and $L_2$Boost, to improve their robustness in the presence of heavy noises/outliers.

Thirdly, we prove that SPLBoost exactly complies with the widely known Majorization-Minimization (MM) algorithm that is implemented by a latent objective function based on a non-convex loss function. This clearly explains theoretically that  why SPLBoost could be  more robust than AdaBoost. Such robust insight also holds for other variations of SPLBoost. In addition, by alternately optimizing two sub-problems that are easy to solve rather than directly optimizing the latent objective function, SPLBoost can keep away from the annoying non-convex optimization problem and get a better local optimal solution.

Finally, the superiority of SPLBoost is extensively substantiated in synthetic dataset and 17 UCI datasets, as compared with other state-of-the-art robust boosting algorithms.

The rest of this paper is organized as follows. We shall provide a brief review on  boosting algorithms and self-paced learning in Section \uppercase\expandafter{\romannumeral2}. Then, the SPLBoost algorithm and its  theoretical analysis are presented in Section \uppercase\expandafter{\romannumeral3}. Section \uppercase\expandafter{\romannumeral4} shows experimental results on several synthetic and UCI data sets. Several concluding remarks are finally made in Section \uppercase\expandafter{\romannumeral5}.
 
\section{Related Work}
\subsection{Robust Boosting Algorithms}
As aforementioned, the biggest defect of AdaBoost is that it can easily overfit to outliers, which inspires a lot of studies on improving the  robustness of AdaBoost \cite{domingo2000madaboost, collins2002logistic, li2016boosting}. Generally speaking, there have three factors that affect the robustness of boosting  algorithms: the loss function, the way to compute the weak learners, and the regularization. Based on those factors, many robust boosting algorithms have been suggested.

In \cite{friedman2000additive}, Friedman et al. proved that the Discrete AdaBoost algorithm builds an additive logistic regression model via Newton-like updates for optimizing the expected risk based on the exponential loss function  $ \varphi \left( yf\left( x\right) \right) =\exp \left( -yf\left( x\right) \right)$, and the Real AdaBoost algorithm fits an additive logistic regression model by stage-wise optimizing the same expected risk as Discrete AdaBoost. With this, Friedman et al. \cite{friedman2000additive} proposed two different robust boosting algorithms: LogitBoost and GentleBoost. The LogitBoost algorithm uses Newton steps for optimizing the logistic loss $ \varphi \left( yf\left( x\right) \right) =\log \left( 1+\exp \left( -yf\left( x\right) \right)\right) $ which is more robust than exponential loss. It is easy to see that the logistic loss function assigns fewer penalties to those samples with negative margins whose absolute values are very large, which are usually outliers. This is why LogitBoost is not so easy to overfit to the outliers. GentleBoost is the other robust boosting algorithm proposed in \cite{friedman2000additive}, and it is different from the LogitBoost in the way of  optimizing  the underlying loss function. Basically, GentleBoost optimizes the exponential loss function as AdaBoost does. The main difference between GentleBoost and Real AdaBoost is that GentleBoost computes the weak learners by using an adaptive Newton step as LogitBoost does. For Real AdaBoost the update $ f_t\left( x\right)  $ is half log-ratio which can be numerically unstable and lead to very large updates or sample weights. However, the updates of GentleBoost lie in the range $ \left[ -1,1 \right] $, leading to more conservative sample weights. Consequently, the influence that outliers exert on GentleBoost is weaker than AdaBoost.

As we can see, although the loss functions of those boosting algorithms referred to above are different from each other, resulting in different performances, they are all convex. Theoretical properties of the boosting algorithms based on convex loss functions have been extensively studied. See, e.g., \cite{koltchinskii2002empirical, zhang2005boosting}. The minimum of convex loss function is easy to compute, which gives rise to the simplicity of those aforementioned boosting algorithms. However, the disadvantage of convex loss function is obvious. It has been shown that convex loss function is not robust enough to tolerate noise and as a result, such boosting algorithms based on convex loss are naturally not insensitive to outliers. Specifically, Long and Servedio \cite{long2010random} proved that any boosting algorithm based on convex loss functions is easily affected by random label noise and they present a sample example named Long/Servedio problem which cannot be learned by those popular boosting algorithms. As such, The results presented by Long and Servedio lead to a lot of studies on boosting algorithms with non-convex loss functions.

Based on the Boost-by-Majority algorithm \cite{freund1990boosting} and BrownBoost \cite{freund2001adaptive}, Freund \cite{freund2009more} proposed a new robust boosting algorithm named RobustBoost which is more robust against outliers than AdaBoost and LogitBoost. The loss function of RobustBoost is non-convex and changes during the boosting process, which is the largest difference between RobustBoost and other popular ones. RobustBoost improves the robustness by allowing pre-assigned $ \theta $ error of margin maximization, and in each step, it updates and solves a differential equation and updates the preassigned target error $ \epsilon $ or the remaining time $ c $ .  We can see from the algorithmic process that there are at least two preassigned parameters that are difficult to implement, which limits the application of RobustBoost.

Most classifiers design algorithms to determine the optimal classifier through three steps: define a proper loss function $ \phi \left( yf\left( x\right) \right)  $, determine a function class $ \mathcal{F} $, and search within $ \mathcal{F} $ for the function which optimizes the objective function based on a predefined loss function. In view of the limitations of these methods, such as low convergence rate and too much sensitivity to the outliers, Masnadi-shirazi and Vasconcelos \cite{masnadi2009design} showed that the problem of classifier design is identical to the problem of probability elicitation, and probability elicitation can be seen as a reverse procedure for solving the classification problem, which provides some new insights on the relationship between the loss function, the minimum risk and the optimal classifier. With this, they derived a new loss function named Savage loss which trades convexity for boundedness. Using this new loss, they proposed so-called SavageBoost, i.e., a new robust boosting algorithm which is more outlier resistant than AdaBoost and LogitBoost. The form of Savage Loss is $ \phi \left( v\right) =\frac{1}{( 1+ e ^ {2v} ) ^2 } $, which clearly shows that unlike the exponential loss and logistic loss where the penalty always increases at a fast speed, Savage loss is non-convex and quickly becomes constant as $ v\longrightarrow -\infty $. Considering that  the weights of the misclassified samples with large margins could be not large, SavageBoost is not sensitive to the outliers, as compared to AdaBoost and LogitBoost. 

To improve the robustness of boosting algorithms, two requirements should be met:
1) design a robust loss function with gentle penalty for the misclassified samples with large margins, 2) design a numerically stable algorithm to optimize the current objective function and obtain weak learner in each step. Based on this two requirements, Miao et al. \cite{miao2016rboost} proposed two robust boosting algorithms named RBoost1 and RBoost2 which have a deep connection with SavageBoost. Both the two boosting algorithms try to optimize the conditional expected risk based on the Savage2 loss function, a new robust loss function is defined as $ \phi \left( v\right) =\frac{1}{( 1+ e ^ {v} ) ^2 } $. It is easy to see that the Savage2 loss function is with the similar form to the Savage loss function and the only difference is the second-order factor in the denominator, which makes the Savage2 loss to give gentler penalty for the misclassified samples with large margins than the Savage loss function. As such,  the proposed RBoost could be more insensitive to the outliers than SavageBoost. In fact, two reasons weaken the robustness of SavageBoost: 1) the results that the weak learners in SavageBoost output are required to be posterior probability estimation, and to estimate the posterior probability is more difficult than classification, 2) the way SavageBoost computes the weak learners is not numerically stable. To avoid such two drawbacks of SavageBoost, RBoost algorithms are carefully designed to optimize the Savage2 loss function. Precisely, RBoost1 algorithm uses the adaptive Newton step to solve the minimization problem which computes a new weak learner based on the current classifier so that the conditional  expected risk is maximally decreased. RBoost2 algorithm is designed to make RBoost1 algorithm which restricts the weak learner algorithms to the regression methods adaptive to more general weak learner algorithms. With the use of more robust loss function and numerically stable methods to compute the weak learners, RBoost1 and RBoost2 algorithms could get good performance for noisy data.

As we already mentioned, most of the existing robust boosting algorithms try to design various robust loss functions with restricted penalties on misclassified samples with large margins and then design computation methods to compute the weak learners based on those loss functions. However, this common framework cannot always be satisfactory. Specifically, convex loss functions have been proven not to be robust enough especially for large label noise or outlier, while although non-convex loss functions possess better antinoise ability,  they induce non-convex optimization problem to compute weak learners, which is an intractable task in general. In this paper, instead of directly designing new robust loss functions, we propose a new robust boosting algorithm named SPLBoost by combining the classical Discrete AdaBoost algorithm with the robust learning idea of self-paced learning. In the next subsection, we shall give a simple introduction to self-paced learning.
 
\subsection{Self-paced Learning}
Humans and animals often learn from the examples which are not randomly presented but organized in a meaningful order which gradually includes from easy and fewer concepts to complex and more ones. Inspired by this principle in humans and animals learning, Bengio et al. \cite{bengio2009curriculum} proposed a new learning paradigm named curriculum learning which is the origin of self-paced learning. In curriculum learning, a model is learned by gradually including from easy to complex samples in training to improve  the accuracy of the model. Obviously, the key of curriculum learning is to find a proper ranking function which assigns learning priorities to training samples. To get a satisfactory model, the quality of the curriculum, i.e., ranking function, is very important and it is oftentimes derived by predetermined heuristics for particular problems in real applications. This may lead to inconsistency between the fixed curriculum and the dynamically learned models.

To alleviate the aforementioned issue, Kumar et al. \cite{kumar2010self} proposed a new model named self-paced learning (SPL) in which instead of being derived by predetermined heuristics, the curriculum design is embedded as a regularization term into the learning objective. The formulation of self-paced learning is as follows:
\vspace{-1mm}
\begin{equation}\label{eq:selfpaced_obj}
\min_{\mathbf{w},\mathbf{v}\in \lbrack 0,1]^{n}}\!\mathbf{E}(\mathbf{w},\mathbf{v};\lambda)=\sum_{i=1}^{n} v_{i}L(y_{i},g(\mathbf{x}_{i},\mathbf{w}))-\lambda \sum_{i=1}^{n}v_i,
\vspace{-1mm}
\end{equation}
where $ \lambda $ is an age parameter for controlling the learning pace, and $ L(y_{i},g(\mathbf{x}_{i},\mathbf{w})) $ denotes the loss function which calculates the cost between the ground truth label $ y_i $ and the estimated label $ g(\mathbf{x}_{i},\mathbf{w}) $. Here $ \mathbf{w} $ denotes the
model parameter inside the decision function $ g $. It can seen from (\ref{eq:selfpaced_obj}) that the loss of a sample is discounted by the latent
weight variable $ \mathbf{v}=[v_1,\cdots,v_n]^T $ and the objective of SPL is to minimize the weighted training loss together with a self-paced regularizer (SP-regularizer). (\ref{eq:selfpaced_obj}) can be generally solved by alternative search strategy (ASS) method,  in which the variables are divided into two disjoint blocks and in each iteration, a block of variables are optimized while keeping the other blocks fixed. With the fixed $ \mathbf{v} $, (\ref{eq:selfpaced_obj}) is a weighted training loss minimization problem which appears in many machine learning problems. And with the fixed $ \mathbf{w} $, (\ref{eq:selfpaced_obj}) is an optimization problem about $ \mathbf{v} $ with global optimum  $ \mathbf{v^*}=[v_1^*,\cdots,v_n^*]^T $ easily calculated by the following formulation:
\begin{equation}\label{solve V }
v_i^*=\begin {cases}
1,&L(y_{i},g(\mathbf{x}_{i},\mathbf{w}))<\lambda,\\
0,&\text{otherwise.}
\end{cases}
\end{equation}

It is not hard to see that self-paced learning implements the automatic selection of samples and then trains model only on those selected samples. When we update $ \mathbf{v} $ with a fixed $ \mathbf{w} $, samples with losses which are smaller than a certain threshold $ \lambda $ are considered to be ``easy'' samples and will be selected in training (i.e., $ v_i^*=1 $), while the rest are considered to be too ``difficult'' to be learned for this $ \lambda $ and will be abandoned (i.e., $ v_i^*=0 $). When we update $ \mathbf{w} $ with a fixed $ \mathbf{v} $, the classifier is trained only on the selected ``easy'' samples. The parameter $ \lambda $ corresponds to the ``age'' of the model that determines the ability of the model to learn ``difficult'' samples. When $ \lambda $  is small, the model can only learn from ``easy'' samples with small losses and as $ \lambda $ grows, more samples with larger losses can be learned to train a more ``mature'' model.

In (\ref{eq:selfpaced_obj}), the so-called SP-regularizer is the negative $ l_1 $-norm that induces the variable $ \mathbf{v} $ which takes only binary values, i.e., $ v_i=1 $ (selected samples) and $ v_i=0 $ (unselected samples). This scheme is called Hard Weighting. Hard Weighting can only determine whether a sample should be selected, which is not good enough because sometimes we also want to discriminate the importance of samples. Jiang et al. \cite{jiang2014easy} theoretically abstracted the intrinsic conditions of SP-regularizer and proposed several soft weighting schemes such as Linear Soft Weighting and Logarithmic Soft Weighting. Zhao et al. \cite{zhao2015self} used self-paced learning for matrix factorization and proposed a new soft weighting schemes named Mixture Weighting which is a hybrid of
the Soft and the Hard Weighting. Li et al. \cite{li2016multi} proposed a novel Polynomial Soft Weighting Regularizer with an adjustable parameter $ t $ in their multi-objective self-paced learning model. Soft Weighting assigns real-valued weights that reflects the latent importance of samples in training more faithfully, which is more reasonable and general than Hard Weighting. In the following, we shall list the formulation of Linear Soft Weighting, Mixture Weighting and Polynomial Soft Weighting respectively, together with their closed-form solutions $v^{\ast }(\lambda ;\ell )$:
\\ Linear Soft Weighting:
\begin{equation*}
\begin{array}{l}
f^{L}(v;\lambda )=\lambda (\frac{1}{2}v^{2}-v)\textrm{; }\
v^{\ast }(\lambda ;\ell) =\left\{
\begin{array}{c}
-\ell/\lambda +1,\textrm{ if}\ \ell<\lambda  \\
0,\textrm{ if}\ \ell\geq \lambda.
\end{array}%
\right.
\end{array}
\end{equation*}
Mixture Weighting:
\begin{equation*}
\begin{array}{l}
f^{M}(v;\lambda ,\gamma )=\frac{\gamma ^{2}}{v+\gamma
	/\lambda };\\
v^{\ast }(\lambda ,\gamma ;\ell ) =\left\{
\begin{array}{c}
1,\textrm{ if}\ \ell \leq \left( \frac{\lambda \gamma }{\lambda +\gamma }%
\right) ^{2} \\
0,\textrm{ if}\ \ell \geq \lambda ^{2} \\
\gamma \left( \frac{1}{\sqrt{\ell }}-\frac{1}{\lambda }\right) ,\textrm{
	otherwise}.
\end{array}
\right.
\end{array}
\end{equation*}
Polynomial Soft Weighting:
\begin{equation*}
\begin{array}{l}
f^{P}(v;\lambda,t )=\lambda (\frac{1}{t}||v||_2^{t}-\sum_{i=1}^nv_i)\textrm{; }\\
v^{\ast }(\lambda,t ;\ell) =\left\{
\begin{array}{c}
(1-\ell/\lambda)^{1/(t-1)},\textrm{ if}\ \ell<\lambda  \\
0,\textrm{ if}\ \ell\geq \lambda.
\end{array}%
\right.
\end{array}
\end{equation*}

Based on (\ref{eq:selfpaced_obj}), many variations of self-paced learning regime have been proposed, such as self-paced re-ranking \cite{jiang2014easy}, self-paced learning with diversity \cite{jiang2014self}, self-paced curriculum learning \cite{jiang2015self} and self-paced multiple-instance-learning \cite{zhang2016co}. And applications of self-paced learning in many machine learning and computer version tasks, like objective detector adaptation \cite{tang2012shifting}, long-term tricking \cite{supancic2013self}, visual category discovery \cite{lee2011learning}, face identification \cite{lin2017active} and specific-class segmentation learning \cite{kumar2011learning}, have demonstrated its effectiveness especially its robustness when dealing with severally  corrupted data.

Meng and Zhao \cite{meng2015objective} provided some theoretical evidences to illustrate the insights under self-paced learning. They proved that the ASS algorithm to solve the SPL problem exactly accords with the majorization minimization (MM) algorithm implemented on a latent nonconvex SPL objective function. Their work laid the theoretical foundation for SPL.

Comparing SPL with AdaBoost, we can see that there is one thing in common between them, that is, training samples with different losses are unequal and will be endowed with different weights. The way of SPL and AdaBoost to assign weights to samples is very different. For AdaBoost, samples with large losses are paid more attention to and are given larger weights. On the contrary, for SPL, samples with losses larger than a certain constant are thought as outliers and their weights are zero. On account of the fact that the reason why AdaBoost is not robust is that AdaBoost assigns too large weights to the samples with very large losses and those samples are usually outliers, we expect that SPL can provide a complementary assistance to restrict the sample weights of AdaBoost and improve its robustness. A new robust boosting algorithm named SPLBoost is proposed based on this new idea. We will introduce the details of SPLBoost and then present some theoretical results in the next section.

\section{SPLBoost}
\subsection{Algorithm}
Given training data $\left( x_1,y_1\right) $,\ldots,$\left( x_n,y_n\right) $, where $ x_i $ is a vector-valued feature and $ y_i\in \left\lbrace 1,-1\right\rbrace  $. It is known that AdaBoost algorithm builds an additive logistic regression model for minimizing the expected risk based on the exponential loss function $ \varphi \left( yf\left( x\right) \right) =\exp \left( -yf\left( x\right) \right)  $. In each iteration, supposing that  we have a current classifier $ F(x) $, AdaBoost then seeks a new weak learner $ f(x) $ through the following optimization problem:
\begin{equation}\label{eq:AdaBoost optimize iteration }
\{\alpha, f \}=\arg \min_{\alpha, f} \sum_{i=1}^{n} \mathrm{e} ^{-y_i(F(x_i)+\alpha f(x_i))}.
\end{equation}
Embeding (\ref{eq:AdaBoost optimize iteration }) in a general SPL model, we can get a new  algorithm named SPLBoost, which seeks a new weak learner $ f(x) $ and updates its weight $ \alpha $ in the final strong learner in every step:
\begin{equation}\label{eq:SPLBoost optimize iteration }
\{\alpha, f, \mathbf{v} \}=\arg \min_{\alpha, f} \sum_{i=1}^{n} v_i\mathrm{e} ^{-y_i(F(x_i)+\alpha f(x_i))}+\hat {f} (v_i,\lambda), 
\end{equation}
where  $\hat{f} $ is a SP-regularizer and $ \mathbf{v}=[v_1,\cdots,v_n]^T $ is the latent weight variable induced from $\hat{f} $, and $ \lambda $ is the ``age'' parameter. It is easy to see that the objective (\ref{eq:SPLBoost optimize iteration }) of SPLBoost in each step  is to minimize a weighted exponential loss together with a self-paced regularizer. In AdaBoost,  the exponential loss is directly minimized and the outliers whose losses are usually very large are easy to be paid more attention to. SPLBoost overcomes this problem by assigns different weights $ v $  to the exponential losses of training samples. Although different SP-regularizer can induce different format of $ v $ ,  $ v $ will always be zero when the loss is very large, which usually means that the corresponding sample is very likely to be outlier. In this way, SPLBoost can eliminate the negative influence of the outliers in training data to a large extent and improve the robustness of AdaBoost.

As in SPL, we shall solve (\ref{eq:SPLBoost optimize iteration }) by alternative search strategy (ASS), which is a popular iterative process. In order to distinguish the iterative process of the ASS from the iterative process of SPLBoost for updating classifier $F(x)$, we call the former inner iteration, and the latter outer iteration. In every inner iteration, with fixed $ \alpha $ and $ f $, (\ref{eq:SPLBoost optimize iteration }) is a optimization problem about $ \mathbf{v} $ with global optimum  $ \mathbf{v^*}=[v_1^*,\cdots,v_n^*]^T $ whose  form is different for different SP-regularizer and has been presented in relevant papers \cite{kumar2010self, jiang2014easy, li2016multi}. Especially, When the SP-regularizer is the negative $ l_1 $-norm, or say, Hard Weighting, we can plug the above exponential loss into the formula (\ref{solve V }) and then $ \mathbf{v^*} $ can be easily calculated by the following formulation: 

\begin{equation}\label{solve V in SPLBoost }
v_i^*=\begin {cases}
1,&\mathrm{e} ^{-y_i(F(x_i)+\alpha f(x_i))}<\lambda,\\
0,&\text{otherwise.}
\end{cases}
\end{equation}
With fixed $ \mathbf{v} $, (\ref{eq:SPLBoost optimize iteration }) is a weighted exponential loss minimization problem:
 \begin{equation}\label{eq:SPLBoost solve f }
\{\alpha, f \}=\arg \min_{\alpha, f} \sum_{i=1}^{n} v_i\mathrm{e} ^{-y_i(F(x_i)+\alpha f(x_i))}.
\end{equation}
The above (\ref{eq:SPLBoost solve f })  is similar to the minimization problem (\ref{eq:AdaBoost optimize iteration }),  thus we can solve it by using the similar idea of AdaBoost. Follow the way in \cite{friedman2000additive}, for fixed $ \alpha
 $, we expand (\ref{eq:SPLBoost solve f }) to second order about $ f(x_i)=0 $: 
\begin{equation}\label{eq:SPLBoost solve f for fixed alpha }
\begin{split}
f&=\arg \min_{f} \sum_{i=1}^{n} v_i\mathrm{e} ^{-y_i(F(x_i)+\alpha f(x_i))}\\
&\approx \arg \min_{f} \sum_{i=1}^{n} v_i\mathrm{e} ^{-y_i(F(x_i))}(1-y_i\alpha f(x_i)+\alpha ^2f(x_i)^2/2).
\end{split}
\end{equation}
Taking $ f(x)\in \{-1,1\} $ into consideration, we have
\begin{equation}\label{eq2:SPLBoost solve f for fixed alpha }
\begin{split}
f&=\arg \min_{f} \sum_{i=1}^{n} v_i\mathrm{e} ^{-y_i(F(x_i))}(y_i-\alpha f(x_i))^2\\
&= \arg \min_{f} \sum_{i=1}^{n} v_i\mathrm{e} ^{-y_i(F(x_i))}(y_i-f(x_i))^2\\
&= \arg \min_{f} \sum_{i=1}^{n} v_iw_i(y_i-f(x_i))^2, 
\end{split}
\end{equation}
where $w_i=\mathrm{e} ^{-y_i(F(x_i))}$ which is the same as the  sample weights defined in AdaBoost. We can find that solving the above weighted least squares problem to obtain the weak learner $ f(x) $ is equivalent  to implement  AdaBoost except for the use of latent weight variable $ \mathbf{v} $. Thus, imitating the approach in AdaBoost, one can train $ f(x) $ from the training data with sample weights $ v_iw_i $ rather than $ w_i $. Given $ f(x)\in \{-1,1\} $, we can directly optimize (\ref{eq:SPLBoost solve f }) to determine $ \alpha $:
\begin{equation}\label{eq:SPLBoost solve alpha for fixed f }
\begin{split}
\alpha &=\arg \min_{\alpha } \sum_{i=1}^{n} v_iw_i\mathrm{e} ^{-y_i\alpha f(x_i)}\\
&= \arg \min_{\alpha } \sum_{y_i=f(x_i)} v_iw_i\mathrm{e} ^{-\alpha}+\sum_{y_i\neq f(x_i)} v_iw_i\mathrm{e} ^{\alpha}.
\end{split}
\end{equation}
It is not hard to see the above objective is a convex function, thus to get the optimal $ \alpha $, we can directly calculate its derivative and set it to be zero:
\begin{equation}\label{derivative is zero}
-\sum_{y_i=f(x_i)} v_iw_i\mathrm{e} ^{-\alpha}+\sum_{y_i\neq f(x_i)} v_iw_i\mathrm{e} ^{\alpha}=0.
\end{equation}
From (\ref{derivative is zero}), we have
\begin{equation}\label{form of alpha}
\begin{split}
\alpha &=\frac{1}{2}\log \frac{\sum_{y_i=f(x_i)} v_iw_i}{\sum_{y_i \neq f(x_i)} v_iw_i}\\
&=\frac{1}{2}\log \frac{1-\sum_{y_i \neq f(x_i)} v_iw_i}{\sum_{y_i \neq f(x_i)} v_iw_i}\\
&=\frac{1}{2}\log \frac{1-\mathrm{err}}{\mathrm{err}}, 
\end{split}
\end{equation}
where $ \mathrm{err}=\sum_{y_i \neq f(x_i)}v_iw_i $ is the weighted misclassification error of weak learner $ f(x) $. It is easy to see that the formula of $ \alpha $ in (\ref{form of alpha}) is also consistent with AdaBoost except for the latent weight variable $ \mathbf{v} $. 

By alternative iteration, we can calculate the optimal $ \mathbf{v} $, $ \alpha $ and $ f $ for (\ref{eq:SPLBoost optimize iteration }). Then we can get the update for $ F(x) $:
\begin{equation}
F(x) \leftarrow F(x)+\alpha f(x).
\end{equation} 
In the next outer iteration, the latent weight variable $ \mathbf{v} $ can be initialized to the current $ \mathbf{v} $ and $ w_i $ is updated as follows:
\begin{equation}\label{eq1:update w}
w_i \leftarrow w_i\mathrm{e}^{-\alpha y_if(x_i)}. 
\end{equation}
Since $ -y_if(x_i)=2\times 1_{y_i\neq f(x_i)}-1 $, the update is equivalent to
\begin{equation}\label{eq2:update w}
w_i \leftarrow w_i\mathrm{exp}\left( \log \frac{1-\mathrm{err}}{\mathrm{err}}1_{y_i\neq f(x_i)}\right).
\end{equation}
This update of $w$ is clearly the same as that in AdaBoost.

\begin{algorithm}[t]
	\caption{SPLBoost algorithm} 
	\label{SPLBoost algorithm}
	\hspace*{0.02in} {\bf Input:} 
	training samples $ \{(x_1,y_1),\cdots, (x_n,y_n)\}$, iteration \\
	\hspace*{0.45in} {count} $ T $, parameter $ \lambda $; \\
	\hspace*{0.02in} {\bf Initialization:} $ w_i=1/n $, $ v_i=1 $, $ i=1,\cdots, n $;
	\begin{algorithmic} 
		\For{t=1 to $ T $}
		\While{not converge} 
		\State 1. Fit the classifier $ f_t(x) \in \{-1,1\} $ using weights 
		\State \hspace*{0.1in} {$ v_iw_i/(\sum_iv_iw_i)$} on the training data;
		\State 2. Compute $ \mathrm{err}=\sum_{y_i \neq f(x_i)}v_iw_i/\sum_iv_iw_i $,
		\State \hspace*{0.17in}{$ \alpha _t=\frac{1}{2}\log \frac{1-\mathrm{err}}{\mathrm{err}} $}; 
		\State 3. Compute $ \mathbf{v} $;
		\EndWhile
		\State \textbf{end while}
		\State 4. Set $ w_i \leftarrow w_i\mathrm{exp}\left( \log \frac{1-\mathrm{err}}{\mathrm{err}}1_{y_i\neq f(x_i)}\right) $;
		\EndFor
		\State \textbf{end for}
	\end{algorithmic}
		\hspace*{0.02in} {\bf Output:} 
		The strong classifier $ sign\sum_{t=1}^{T}\alpha _tf_t(x) $;
\end{algorithm}

The details of SPLBoost are summarized in Algorithm \ref{SPLBoost algorithm}. Next, we shall give some remarks about it. 

Firstly, there are two layers of iteration in Algorithm 1: the outer iteration is to update the classifier $ F(x) $, and the inner iteration is to find the optimal latent weight variable $ \mathbf{v} $, the weak learner $ f(x) $ and its weight $ \alpha $ in the current outer iteration. When a new inner iteration starts, the latent weight variable $ \mathbf{v} $ is initialized to the optimal value provide by the last outer iteration and our experiments show that in this case, the inner iteration is rapidly converge and there are no significant difference between the case where the inner iteration runs only one step and the case where the inner iteration keeps running until converged. As such, we actually set the inner iteration as one step to speedup the algorithm implementation. 

Secondly, for choosing the age parameter $ \lambda $, it is easy to see that when the losses of samples are larger than $ \lambda $, the latent weight variable of those samples could be zero, which means that those samples would not be selected during training  process. Thus $ \lambda $ actually represents the ``tolerance'' of the algorithm toward noises and outliers. The larger $ \lambda $ is, the more ``tolerant'' the algorithm is to noises and outliers and the less the samples which are considered to be outliers that would be abandoned ($ v_i=0 $). Furthermore,  when $ \lambda $ is large enough, the SPBoost algorithm degenerates into AdaBoost. On the contrary, the smaller $ \lambda $ is, the more ``rigorous'' the algorithm is to large noises and outliers and more samples would be abandoned. Apparently, the value of $ \lambda $ has a huge influence on the performance of the algorithm and thus it is important to select a appropriate $ \lambda $. In practice, we usually select the proper $ \lambda $ via cross validation.

Finally, since that the weak learner $ f(x) $ produced by Algorithm \ref{SPLBoost algorithm} is restricted in $ \{-1,1\} $,  the losses of the samples can only be $ \mathrm{e}^{-\alpha} $ or $ \mathrm{e}^\alpha $ when we calculate $ \mathbf{v}$ in the first outer iteration step, where $ \alpha $ is the weight of the first weak learner. Considering that $ v_i=0$ for the samples whose losses are larger than $ \lambda $, the samples that are misclassified by the first weak learner could all not be selected for training in the next outer iteration if $ \lambda $ falls between $[\mathrm{e}^{-\alpha} , \mathrm{e}^\alpha]$, which usually means that too many samples would be abandoned due to the low accuracy of the weak learner. To avoid this kind of unreasonable situation, one can adopt a warm start produre, i.e., in the first few outer iteration steps let $ \lambda $ be a very large number instead of the input value, and after obtaining the corresponding weak learners, $ \lambda $ would be reset to be the input value. As such,  the first few weak learners are trained by AdaBoost and as a result, the samples are determined whether they should be selected or not based on a classifier that is not so bad. According to our experience, it is reasonable  that in the first three outer iterations $ \lambda $ is set to be $10^6$, and then the satisfactory $ \lambda $ can be tuned in $ [1.0,6.0] $ by cross validation.

\begin{figure}
			\centering
			\subfigure[sample weight of various boosting algorithms]{
				\label{fig:sample weight a}
			\includegraphics[width=\linewidth]{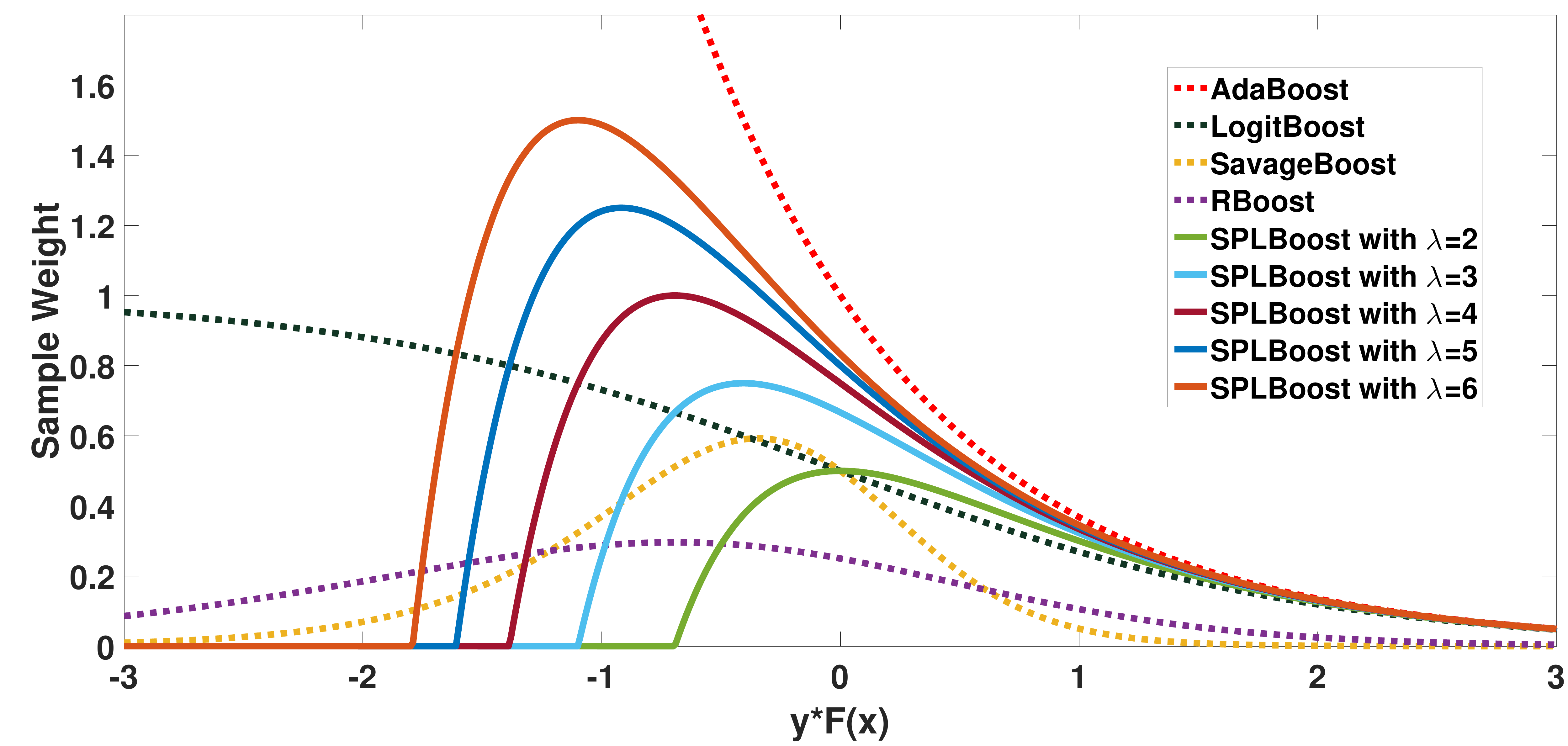}}\\
	
		\centering
		\subfigure[sample weight of SPLBoost under various SP-regularizers]{
			\label{fig:sample weight b}
		\includegraphics[width=\linewidth]{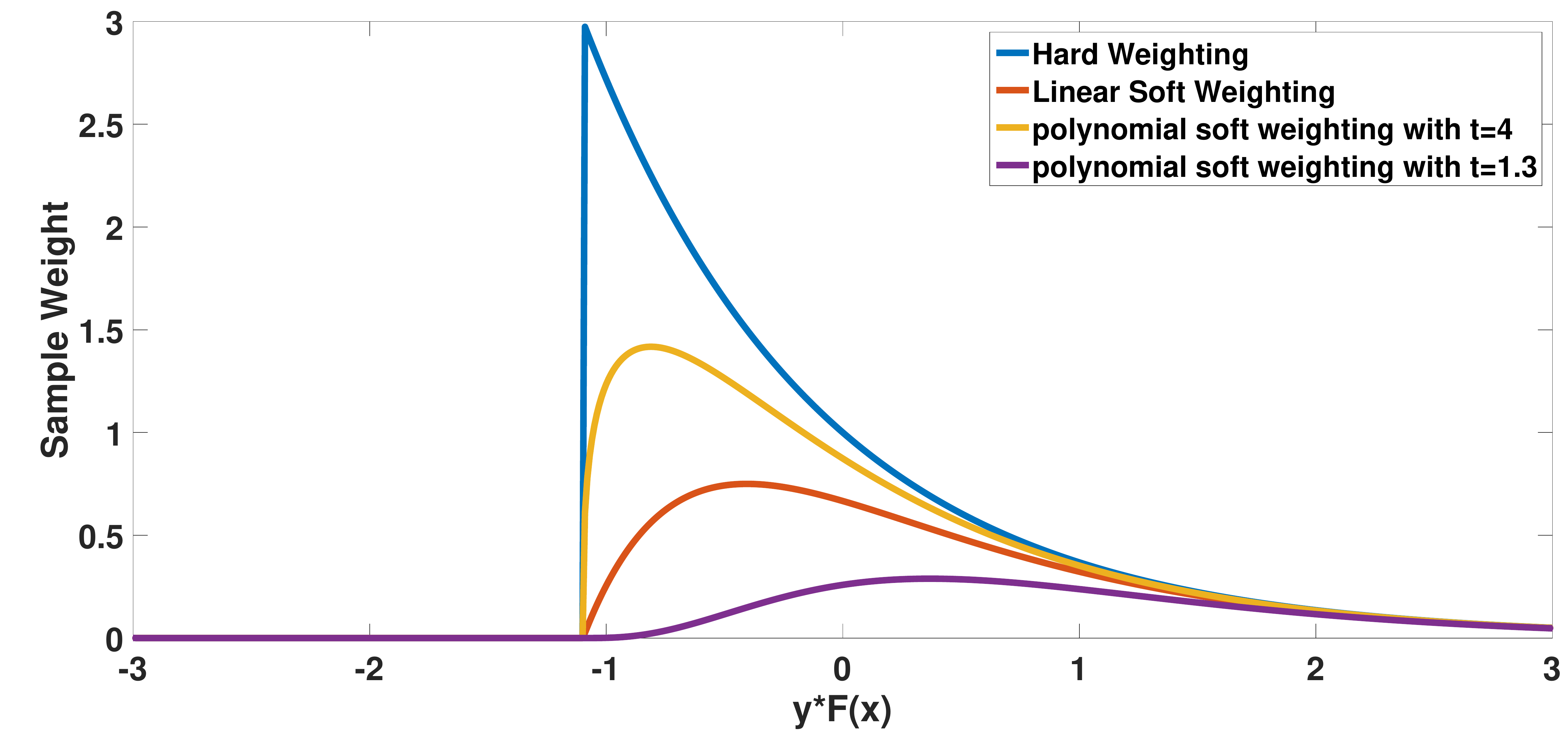}}
	\caption{(a) illustrates the sample weight of various boosting algorithms, including AdaBoost, LogitBoost, SavageBoost, RBoost and SPLBoost with $ \lambda=2,3,4,5,6 $, where the SP-regularizer is Linear Soft Weighting. (b) illustrates the sample weight of SPLBoost with various SP-regularizers, including Hard Weighting, Linear Soft Weighting, Polynomial Soft Weighting with $ t=4 $ and $ t=1.3 $, where $ \lambda $ is fixed as 3.
	}\label{fig:sample weight}
\end{figure}  

Different from AdaBoost where the sample weight is $ w_i=\mathrm{e} ^{-y_i(F(x_i))} $, SPLBoost modifies the sample weight to be $ v_iw_i $ by introducing the latent weight variable $ \mathbf{v} $. We shall show that this is the reason why SPLBoost is more robust. Fig.\ref{fig:sample weight a} illustrates the sample weight of different boosting algorithms, including AdaBoost, LogitBoost, SavageBoost, RBoost and SPLBoost with $ \lambda=2,3,4,5,6 $, where the SP-regularizer is Linear Soft Weighting. It is easy to observe that AdaBoost pays too much attention on the misclassified samples with very large margins which are usually outliers. Thus, AdaBoost is usually very sensitive to outliers. In LogitBoost the weights of the misclassified samples with very large margins are smaller than that in AdaBoost, but they are still larger than the weights of any other samples, thus LogitBoost is still easily affected by outliers. For two popular robust boosting algorithms with non-convex loss functions, i.e., SavageBoost and RBoost,  they give small weights to the misclassified samples with very large margins, thus they are usually insensitive to outliers.  For SPLBoost, if the margin of the misclassified samples is larger than a certain constant which is determined by $ \lambda $, the weights of those samples could be zero. In that way, SPLBoost can more thoroughly eliminate the influence of the always misclassified samples, which are always the outliers.  Fig.\ref{fig:sample weight b} illustrates the sample weight of SPLBoost with different SP-regularizers,  here $ \lambda $ is fixed as 3. One can see that different SP-regularizers provide different distributions of the sample weight which may be more suited to different training data. Some distributions are spiculate and the others are gentle, but they could all be zero when the margins of the misclassified samples are large enough, which guarantees their robustness to outliers.

\subsection{Theoretical Analysis}
In this subsection we shall provide some theoretical analysis of SPLBoost which could show a clear theoretical evidence to clarify why SPLBoost is capable of performing robust especially in outlier/heavy noise cases.

For convenience, we briefly write the exponential loss function $ \mathrm{e}^{-y_i(F(x_i)+\alpha f(x_i))} $ as $ \ell_{i}(\alpha,f)/\ell_i $ and $ \mathrm{e}^{-y(F(x)+\alpha f(x))} $ as $ \ell(\alpha,f)/\ell $ in the following. According to the Theorem 1 in \cite{meng2015objective}, it can be derived that, for latent weight variable $ v^* (\lambda;l) $ conducted by an SP-regularizer and $ \tilde{F}_\lambda(\ell) $ calculated by $ \tilde{F}_\lambda(\ell)=\int_{0}^{\ell} v^* (\lambda;l) \, \mathrm{d}l $, 
and given a fixed $ \alpha ^* $ and $ f^* $, it holds that:
\begin{equation}
\tilde{F}_\lambda(\ell(\alpha,f))\leq Q_\lambda(\alpha,f\mid \alpha ^*,f^*),
\end{equation}
where 
\begin{equation}
\begin{split}
Q_\lambda(\alpha,f\mid \alpha ^*,f^*)= &\tilde{F}
_\lambda(\ell(\alpha ^*,f^*)) +\\
& v^* (\lambda;\ell(\alpha ^*,f^*))(\ell(\alpha ,f)-\ell(\alpha ^*,f^*)).
\end{split}	
\end{equation}
With this, denote
\begin{equation}
\begin{split}
Q_\lambda ^{(i)}(\alpha,f\mid \alpha ^*,f^*)= &\tilde{F}
_\lambda(\ell _i(\alpha ^*,f^*))+\\
& v^* (\lambda;\ell _i(\alpha ^*,f^*))(\ell _i(\alpha ,f)-\ell _i(\alpha ^*,f^*)),
\end{split}
\end{equation}
and we can then easily get that
\begin{equation}\label{eq:surrogate function}
\sum_{i=1} ^n  \tilde{F}_\lambda(\ell_i(\alpha ^*,f^*))\leq \sum_{i=1} ^n  Q_\lambda ^{(i)}(\alpha,f\mid \alpha ^*,f^*), 
\end{equation}
which verifies that $ \sum_{i=1} ^n  Q_\lambda ^{(i)}(\alpha,f\mid \alpha ^*,f^*) $ can be used as a surrogate function of $ \sum_{i=1} ^n  \tilde{F}_\lambda(\ell_i(\alpha ^*,f^*)) $ in the MM algorithm. We can then ready to prove the following result.

\begin{theorem}\label{the:objective function}
	The SPLBoost algorithm is equivalent to the MM algorithm on a minimization problem of the latent SPLBoost objective $ \sum_{i=1}^{n}\tilde{F}_\lambda (\mathrm{e}^{-y_iF(x_i)}) $ with the latent loss  $ \tilde{F}_\lambda (\mathrm{e}^{-yF(x)}) $. 
\end{theorem}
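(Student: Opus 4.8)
The plan is to show that one outer iteration of SPLBoost reproduces, step for step, one iteration of the MM algorithm applied to the latent objective $g(F)=\sum_{i=1}^{n}\tilde{F}_\lambda(\mathrm{e}^{-y_iF(x_i)})$, so that the two schemes generate exactly the same sequence of strong classifiers. I would organize the argument around the two defining ingredients of an MM step --- a surrogate that majorizes the objective and is tangent to it at the current iterate --- and then check that minimizing this surrogate is verbatim the SPLBoost weak-learner subproblem.

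First I would recall the MM recipe: to decrease $g$ at the current classifier $F$, one constructs a surrogate $Q(\cdot\mid F)$ that (i) dominates $g$ everywhere and (ii) agrees with $g$ at $F$, and then minimizes $Q$; the sandwich $g(F^{\mathrm{new}})\le Q(F^{\mathrm{new}}\mid F)\le Q(F\mid F)=g(F)$ then yields monotone descent. Identifying the current iterate of the increment with $\alpha=0$ (so that $F+\alpha^*f^*=F$ and $\ell_i(\alpha^*,f^*)=\mathrm{e}^{-y_iF(x_i)}=w_i$), I claim that $\sum_{i=1}^{n}Q_\lambda^{(i)}(\alpha,f\mid\alpha^*,f^*)$ is precisely such a surrogate for $\sum_{i=1}^{n}\tilde{F}_\lambda(\ell_i(\alpha,f))$. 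The majorization property (i) is the inequality (\ref{eq:surrogate function}) obtained by summing over the $n$ samples the pointwise bound supplied by Theorem 1 of \cite{meng2015objective}; the tangency property (ii) follows by substituting $(\alpha,f)=(\alpha^*,f^*)$ into each $Q_\lambda^{(i)}$, which annihilates the linear term and returns $\sum_i\tilde{F}_\lambda(\ell_i(\alpha^*,f^*))=g(F)$.

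Next I would perform the surrogate minimization. Dropping from $\sum_iQ_\lambda^{(i)}(\alpha,f\mid\alpha^*,f^*)$ the additive constants $\tilde{F}_\lambda(\ell_i(\alpha^*,f^*))$ and $v^*(\lambda;\ell_i(\alpha^*,f^*))\ell_i(\alpha^*,f^*)$, which do not depend on $(\alpha,f)$, the minimization collapses to
\begin{equation*}
\arg\min_{\alpha,f}\sum_{i=1}^{n}v^*(\lambda;\ell_i(\alpha^*,f^*))\,\mathrm{e}^{-y_i(F(x_i)+\alpha f(x_i))}.
\end{equation*}
Writing $v_i=v^*(\lambda;\ell_i(\alpha^*,f^*))$, this is exactly the weighted exponential-loss problem (\ref{eq:SPLBoost solve f }) that SPLBoost solves for the new weak learner and its coefficient, hence it returns the same $f$ and $\alpha$ via (\ref{eq2:SPLBoost solve f for fixed alpha })--(\ref{form of alpha}). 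The one remaining identification is that these surrogate weights $v_i=v^*(\lambda;w_i)$ are precisely the latent weights produced by the SPLBoost $\mathbf{v}$-update, e.g. (\ref{solve V in SPLBoost }) for Hard Weighting and the analogous closed forms for the soft regularizers.

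Assembling these pieces, a single MM step --- build the surrogate, which amounts to computing $\mathbf{v}$ from the current losses $w_i$, and then minimize it, which amounts to fitting $(\alpha,f)$ on the $\mathbf{v}$-reweighted data --- coincides with one outer iteration of SPLBoost. I expect the main obstacle to be the clean bookkeeping of the stagewise additive structure: I must make explicit that the ``previous iterate'' entering the MM surrogate is the current strong classifier $F$ (equivalently the zero increment $\alpha=0$), so that $\ell_i(\alpha^*,f^*)$ specializes to $w_i$, and that minimizing the surrogate over the increment $\alpha f$ is exactly what advances $F\mapsto F+\alpha f$. A secondary point is that this exact correspondence requires the inner ASS to be truncated to a single step; I would note that this is precisely the setting adopted in Algorithm \ref{SPLBoost algorithm}, so the equivalence holds without any further assumption.
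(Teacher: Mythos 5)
Your core machinery is the same as the paper's: the surrogate $Q_\lambda^{(i)}$ built from Theorem 1 of Meng and Zhao, the majorization inequality (\ref{eq:surrogate function}), the identification of the majorization step with the $\mathbf{v}$-update, and the reduction of the surrogate minimization (after dropping constants) to the weighted exponential-loss problem (\ref{eq:SPLBoost solve f }). The tangency check you add is correct and makes the monotone-descent property explicit, which the paper leaves implicit. However, there is a genuine gap in scope. Your correspondence is ``one outer iteration $=$ one MM step,'' with the surrogate always anchored at the zero increment of the current strong classifier $F$, and you yourself note that this requires the inner ASS loop to run exactly once. You then justify the restriction by asserting that single-step truncation is precisely the setting of Algorithm \ref{SPLBoost algorithm}. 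That is a misreading: Algorithm \ref{SPLBoost algorithm} contains an inner \emph{while not converge} loop; the one-step truncation is mentioned in the paper's remarks only as a practical speedup, not as the definition of the algorithm. So, as written, your proof does not establish the theorem for SPLBoost as actually defined.

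Moreover, your claim that the exact correspondence \emph{requires} truncation to a single inner step is the opposite of what the paper proves. The paper treats every inner ASS step as an MM step: in its Case 1 (inner loop not yet converged) the surrogate is anchored at the previous inner iterate $(\alpha_t^k, f_t^k)$ --- not at the zero increment --- and the majorization/minimization steps reproduce steps 1--3 of the inner loop; in its Case 2 (inner loop converged) the identity $\ell_i^{t-1}(\alpha_t, f_t) = \ell_i^{t}(\alpha=0, f)$ shows that the surrogate anchored at the converged increment relative to $F_{t-1}$ coincides with the one anchored at the zero increment relative to $F_t = F_{t-1} + \alpha_t f_t$, so the MM chain passes seamlessly into the next outer iteration. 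Your argument is essentially Case 2 alone. To repair it, you should add the Case 1 analysis, i.e., anchor the surrogate at an arbitrary inner iterate rather than only at $\alpha^* = 0$; once that is done, the equivalence holds for the full algorithm with no truncation assumption.
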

\begin{proof}
	Assume that we have completed $(t-1)$ times outer iteration and get the classifier $ F_{t-1} $. Denote $ f_t^k $ and $ \alpha _t ^k $ as the weak classifier and its weight learned from the $k$th inner iteration in the $ t $th outer iteration, then such two alternative search steps in the next iteration can be explained as a standard MM scheme. Precisely, there are two cases should be dealt with.\\
	{\bf Case 1.} If the inner iteration  have not converged after the $ k $th step, we denote the surrogate function
	\begin{equation*}
	\begin{split}
    Q_\lambda ^{(i)}(\alpha,f\mid \alpha _t ^k,f_t ^k)
	=&\tilde{F}_\lambda(\ell_i^{t-1}(\alpha _t ^k,f_t ^k)) +v^* (\lambda;\ell_i^{t-1}(\alpha _t ^k,f_t ^k))\\
	&\times(\ell_i^{t-1}(\alpha ,f)-\ell_i^{t-1}(\alpha _t ^k,f_t ^k)),
	\end{split}
	\end{equation*}
	where $ \ell_i^{t-1}(\alpha ,f)=e^{-y_i(F_{t-1}(x_i)+ \alpha f(x_i))}$. \\
	\textit{Majorization step:} To obtain each  $ Q_\lambda ^{(i)}(\alpha,f\mid \alpha _t^k,f_t^k) $, we only need to calculate $ v^* (\lambda;\ell_i^{t-1}(\alpha _t^k,f_t^k)) $
	by solving the following problem under the corresponding SP-regularizer $\hat{f}(v_i,\lambda)$:
	\begin{equation*}
	v^* (\lambda;\ell_i^{t-1}(\alpha _t^k,f_t^k))=\arg \min_{v_i \in [0,1]} v_i\ell_i^{t-1}(\alpha _t^k,f_t^k) + \hat{f}(v_i,\lambda).
	\end{equation*}
	This exactly complies with updating $ \mathbf{v} $ in Algorithm 1.\\
	\textit{Minimization step:} we need to calculate:
	\begin{flalign*}
	&[\alpha _t^{k+1},f_t^{k+1}]&\\
	&=  \arg \min_{\alpha,f} \sum_{i=1}^n  \tilde{F}_\lambda(\ell_i^{t-1}(\alpha _t^k,f_t^k))+v^* (\lambda;\ell_i^{t-1}(\alpha _t^k,f_t^k))&\\
	&\quad\times(\ell_i^{t-1}(\alpha ,f)-\ell_i^{t-1}(\alpha _t^k,f_t^k))&\\
	&=  \arg \min_{\alpha,f} \sum_{i=1}^n v^* (\lambda;\ell_i^{t-1}(\alpha _t^k,f_t^k))\ell_i^{t-1}(\alpha ,f)&
	\end{flalign*}
	which is exactly equivalent to update $ \alpha $ and $ f $ in Algorithm 1.\\
	{\bf Case 2.} If the inner iteration  have converged after the $ k $th step with the finally learned weak classifier $ f_t $ and its weight $ \alpha _t $ , we denote $ F_t=F_{t-1}+ \alpha _tf_t $ and the surrogate function
	\begin{flalign*}
	&Q_\lambda ^{(i)}(\alpha,f\mid \alpha _t,f_t)&\\
	&=\tilde{F}_\lambda(\ell_i^{t-1}(\alpha _t,f_t)) +v^* (\lambda;\ell_i^{t-1}(\alpha _t,f_t))(\ell_i^{t-1}(\alpha ,f)-\ell_i^{t-1}(\alpha _t,f_t))&\\
	&=\tilde{F}_\lambda(\ell_i^{t}(\alpha =0,f)) +v^* (\lambda;\ell_i^{t}(\alpha =0,f))(\ell_i^{t}(\alpha ,f)-\ell_i^{t}(\alpha =0,f))&
	\end{flalign*} 
	where $ \ell_i^t(\alpha ,f)=e^{-y_i(F_{t}(x_i)+ \alpha f(x_i))}, $\\
	\textit{Majorization step:} To obtain each  $ Q_\lambda ^{(i)}(\alpha,f\mid \alpha _t,f_t) $ ,we only need to calculate $ v^* (\lambda;\ell_i^{t}(\alpha =0,f))= v^* (\lambda;\ell_i^{t-1}(\alpha _t,f_t)) $
	by solving the following problem under the corresponding SP-regularizer $ \hat{f}(v_i,\lambda) $ :
	\begin{equation*}
	v^* (\lambda;\ell_i^{t-1}(\alpha _t,f_t))=\arg \min_{v_i \in [0,1]} v_i\ell_i^{t-1}(\alpha _t,f_t) + \hat{f}(v_i,\lambda).
	\end{equation*}
	This exactly complies with updating $ \mathbf{v} $ in Algorithm 1.\\
	\textit{Minimization step:} we need to calculate:
	\begin{flalign*}
	&[\alpha _{t+1}^{1},f_{t+1}^{1}]&\\
	&=\arg \min_{\alpha,f} \sum_{i=1}^n  \tilde{F}_\lambda(\ell_i^{t}(\alpha =0,f)) +v^* (\lambda;\ell_i^{t}(\alpha =0,f))&\\
	&\quad \times(\ell_i^{t}(\alpha ,f)-\ell_i^{t}(\alpha =0,f))&\\
	&=\arg \min_{\alpha,f} \sum_{i=1}^n v^* (\lambda;\ell_i^{t}(\alpha =0,f))\ell_i^{t}(\alpha ,f)&\\
	&=\arg \min_{\alpha,f} \sum_{i=1}^n v^* (\lambda;\ell_i^{t-1}(\alpha _t,f_t))\ell_i^{t}(\alpha ,f)&
	\end{flalign*}
	which is exactly equivalent to the steps of update $ \alpha $ and $ f $ in Algorithm 1.
	
\end{proof}

With Theorem 1, various off-the-shelf theoretical results of MM algorithm can then be used to explain the properties of SPLBoost. Particularly, based on the well-known convergence theory of MM algorithm, that is, the lower-bounded latent SPLBoost objective is monotonically decreasing during the iteration. Thus, a weak convergence result of SPLBoost can be directly obtained.

A number of  formulas of the latent loss $ \tilde{F}_\lambda(\ell)=\int_{0}^{\ell} v^* (\lambda;l) \, \mathrm{d}l $ under various SP-regularizers have been calculated and presented in \cite{li2016multi, meng2015objective}, and we only need to plug the exponential loss function $ \ell =\exp \left( -yF\left( x\right) \right)  $ into those formulas to get the latent SPLBoost losses under various SP-regularizers. Fig. \ref{fig:loss function a} illustrates some popular loss functions, including exponential loss, logistic loss, Savage loss, Savage2 loss, 0-1 loss and latent SPLBoost loss with $ \lambda=2,3,4,5,6 $, where the SP-regularizer is Linear Soft Weighting. It is easy to see from Fig. \ref{fig:loss function a} that,  compared with the original exponential loss function, the latent SPLBoost loss has an evident suppressing effect on the large losses. When the loss is larger than a certain threshold which is determined by the ``age'' parameter $ \lambda $, the latent SPLBoost loss $ \tilde{F}_\lambda (\mathrm{e}^{-yF(x)}) $ becomes a constant thereafter, which rationally explains why SPLBoost shows good robustness to the outliers and heavy noises. The misclassified samples with very large margins will have constant SPLBoost losses and thus have no effect on the model training due to their zero gradients. Corresponding to the original SPLBoost model, the latent weight variable $ v_i $ of those large-loss samples will be 0, and thus those samples will have no influence on the training of the weak learners. $ \lambda $ actually determines the ``degree'' of the suppressing effect SPLBoost loss has on the large losses. The larger $ \lambda $ is, the gentler the suppressing effect is, and vice versa. When $ \lambda =\infty $, the suppressing effect completely disappears and the latent SPLBoost loss under Hard Weighting SP-regularizer degenerates into exponential loss. Fig. \ref{fig:loss function b} illustrates the latent SPLBoost loss under various SP-regularizers, including Hard Weighting, Linear Soft Weighting, Polynomial Soft Weighting with $ t=4 $ and $ t=1.3 $, where $ \lambda $ is fixed as 3. We can see that different SP-regularizers give different shapes of the latent SPLBoost loss, but they will all becomes constant when the loss is larger than a certain constant, which guarantees their robustness to the outliers and heavy noises.  
\begin{figure}
	\centering
	\subfigure[various loss functions]{
		\label{fig:loss function a}
		\includegraphics[width=\linewidth]{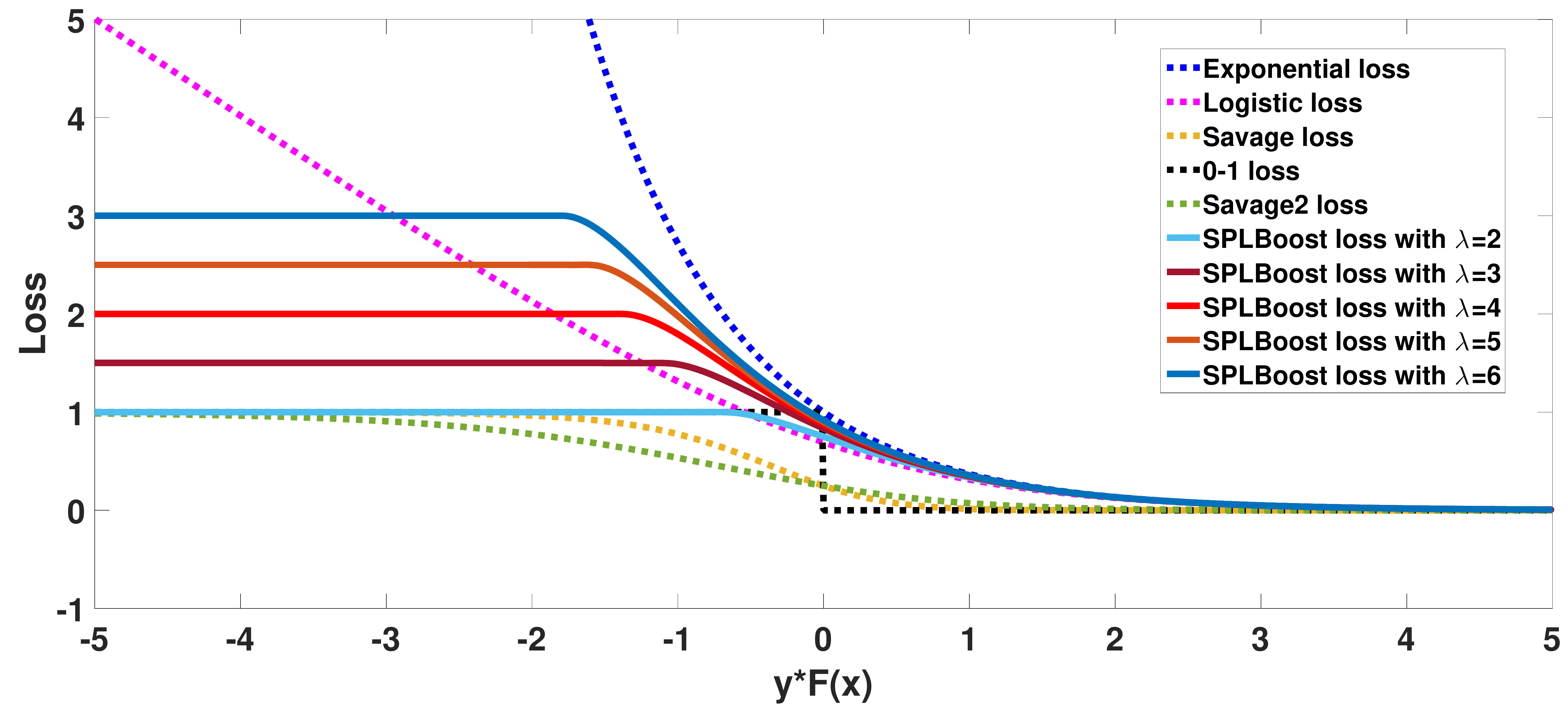}}\\
	
	\centering
	\subfigure[latent SPLBoost loss under various SP-regularizers]{
		\label{fig:loss function b}
		\includegraphics[width=\linewidth]{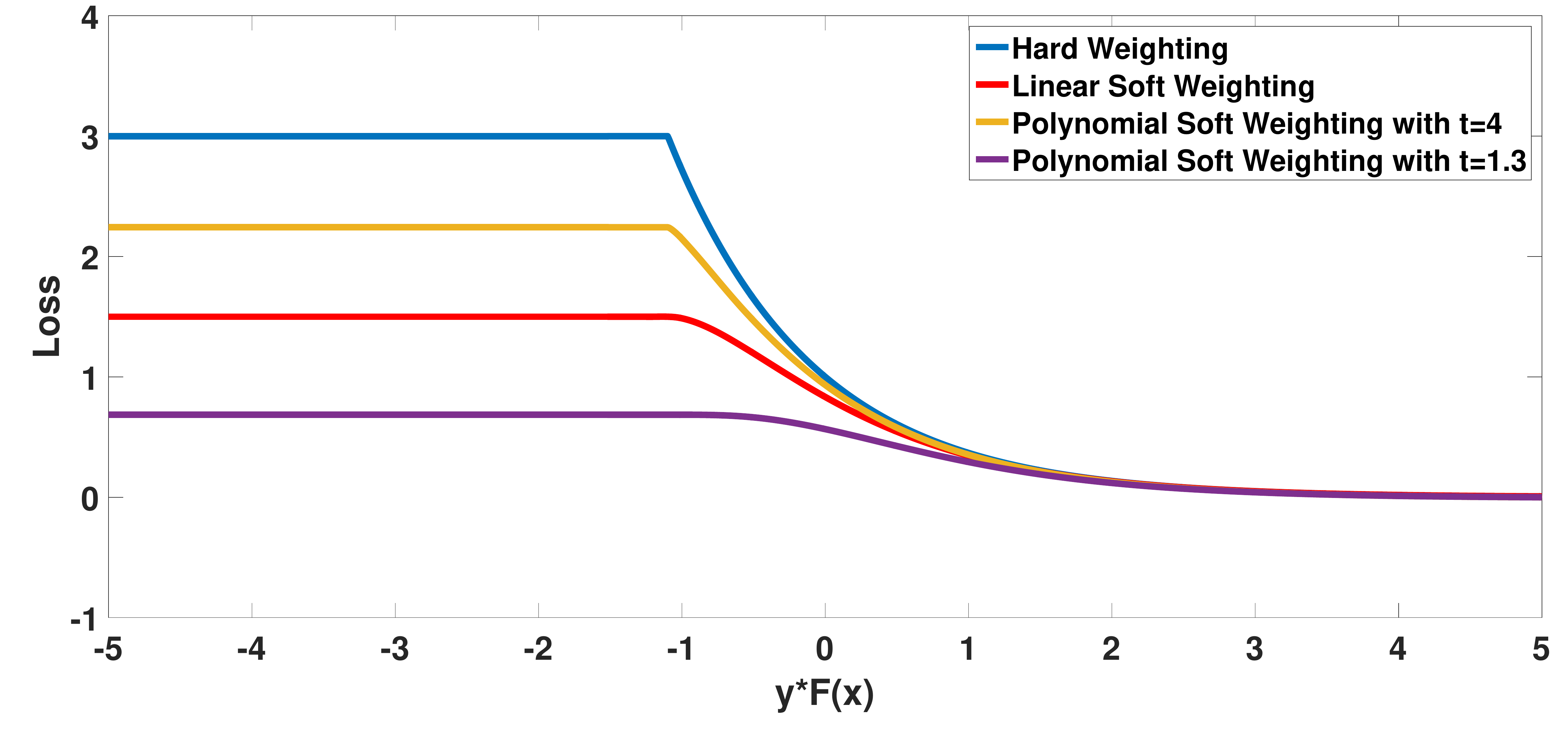}}
	\caption{(a) illustrates various loss functions, including exponential loss, logistic loss, Savage loss, Savage2 loss, 0-1 loss and latent SPLBoost loss with $ \lambda=2,3,4,5,6 $, where the SP-regularizer is Linear Soft Weighting. (b) illustrates the latent SPLBoost loss under various SP-regularizers, including Hard Weighting, Linear Soft Weighting, Polynomial Soft Weighting with $ t=4 $ and $ t=1.3 $, where $ \lambda $ is fixed as 3.
	}\label{fig:loss function}
\end{figure}

According to the above analysis, it is easy to see that SPLBoost is actually a optimization problem with a non-convex loss function. Different from many other robust boosting algorithms which directly optimize the non-convex objective functions, SPLBoost decomposes the minimization of the robust but difficult-to-solve non-convex problem into two much easier optimization problems with respect to the latent weight variable $ \mathbf{v} $ and the weak learner $ f(x) $. In this sense, SPLBoost avoids the difficulty of  non-convex optimization and simplifies the solving of such a problem.   

\section{Experiments}
In this section, we will test the robustness of the proposed SPLBoost algorithm through the thorough experiments in both synthetic and UCI data set.
\subsection{Synthetic Gaussian Data Set}
It has been shown that the loss functions used by the boosting algorithms have an important effect on the robustness of the algorithms to the outliers and heavy noises. Moreover, the reweighting strategy of a boosting
algorithm directly comes from the loss function it used and directly determines how much attention the algorithm pays to the various samples. Thus, a reasonable reweighting strategy is necessary to a robust boosting algorithm and to weaken the interference of the outliers to the training, a good reweighting strategy should give the least possible weights to the outliers. In the synthetic data set, the underlying distribution of the samples and the information of the outliers added to those samples is known, thus it is easy to determine the optimal Bayes decision boundary
for the classification problem and observe the rationality of the distribution of the sample weights. To compare the different reweighting strategies of different boosting algorithms, we first evaluate the proposed SPLBoost algorithm, AdaBoost and some other robust boosting algorithm, including LogitBoost, SavageBoost, RBoost and RobustBoost, on a synthetic Gaussian data set, and to directly visualize the experimental results, the data set is two dimensions. The experimental settings and results are as follows.

According to the 2-D Gaussian distributions
\begin{equation*}
   N\left( [2,-2],
   \begin{bmatrix}
   2.5 & 1.5\\
   1.5 & 5
   \end{bmatrix} 
   \right)  
\end{equation*} 
and
\begin{equation*}
N\left( [-2,2],
\begin{bmatrix}
2.3 & -0.7\\
-0.7 & 2.3
\end{bmatrix} 
\right),  
\end{equation*} 
we first generate 100 samples for both the negative and positive classes, then randomly select 15\% from both the two classes and reverse their labels, and those samples selected can be considered to be outliers. In this way we have obtained the two-class training data with 15\% outliers and then we can train the classifiers using the aforementioned boosting algorithms. For AdaBoost, SPLBoost and RobustBoost in which the weak learner algorithms can be classification methods, the classification tree C$ 4.5 $ is selected to be the weak learner and in SPLBoost the Hard Weighting is used to be the SP-regularizer. And for LogitBoost and RBoost in which the weak learner algorithms are restricted to regression methods, the regression tree CART is used to be the weak learner.   

\begin{figure}[htbp]
	\subfigure[original data]{
		\label{experiment:original}
		\includegraphics[width=0.45\linewidth]{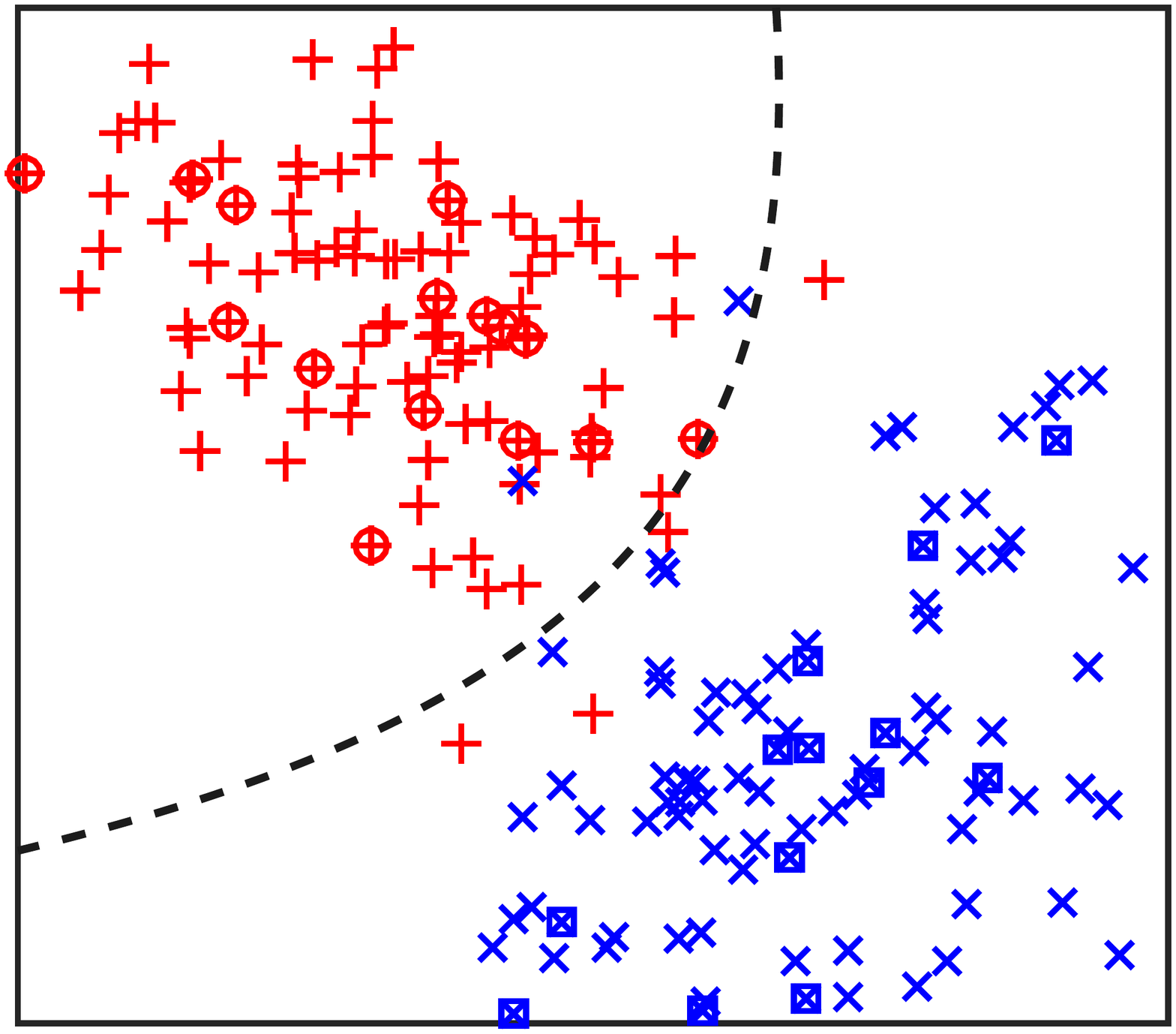}}
	\subfigure[AdaBoost]{
		\label{experiment:AdaBoost}
		\includegraphics[width=0.45\linewidth]{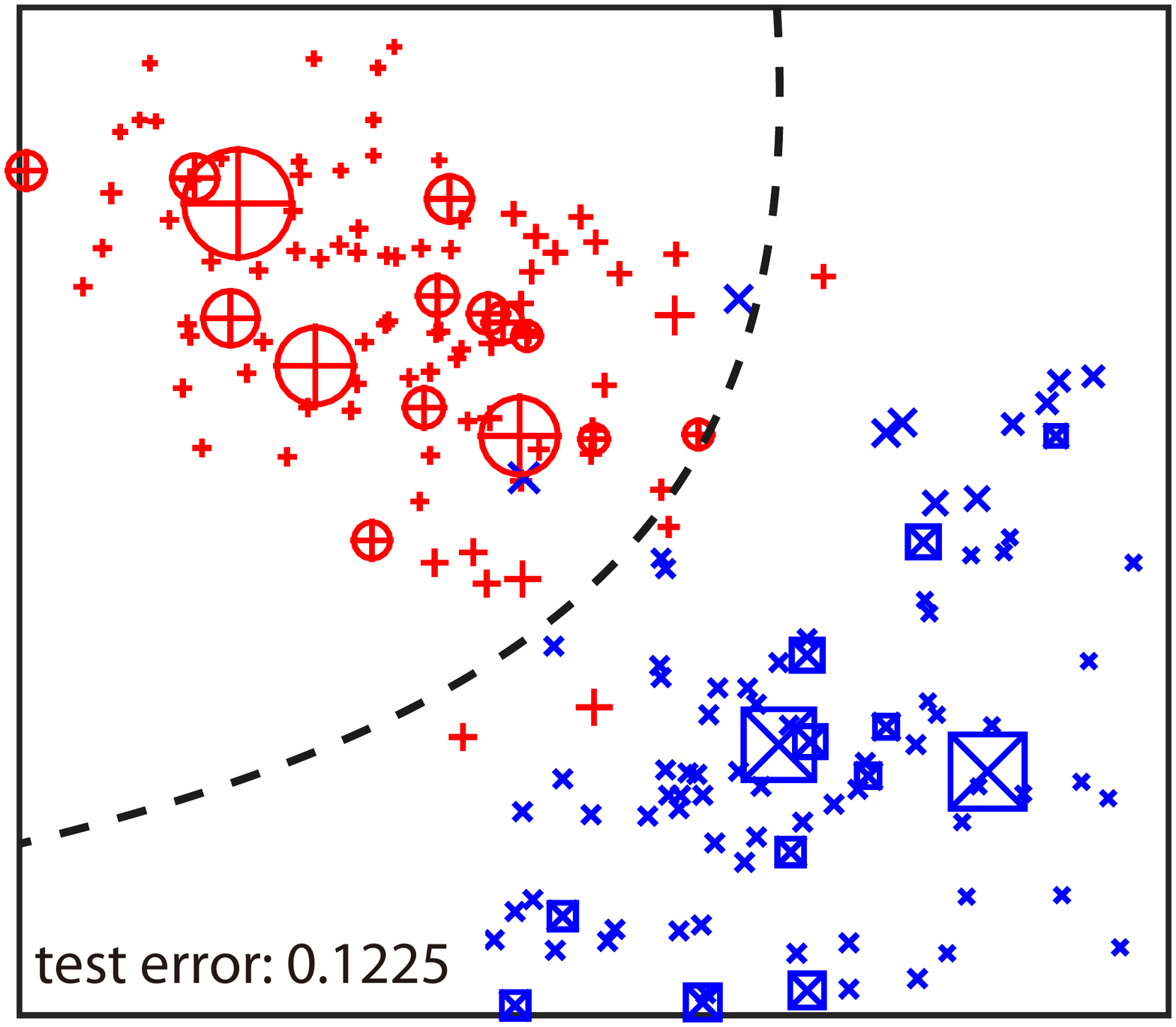}}\\
	\subfigure[LogitBoost]{
		\label{experiment:LogitBoost}
		\includegraphics[width=0.45\linewidth]{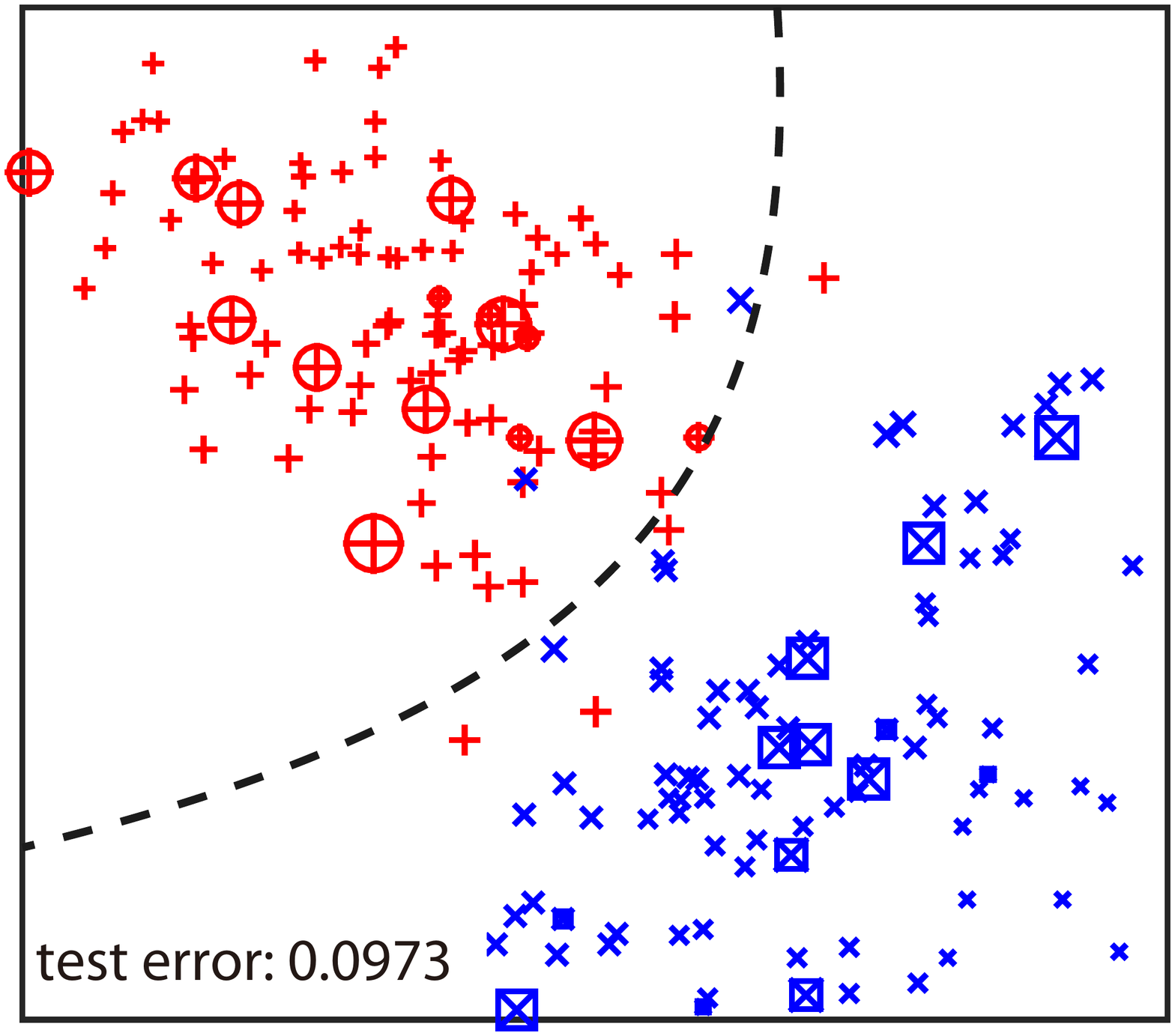}}
	\subfigure[SavageBoost]{
		\label{experiment:SavageBoost}
		\includegraphics[width=0.45\linewidth]{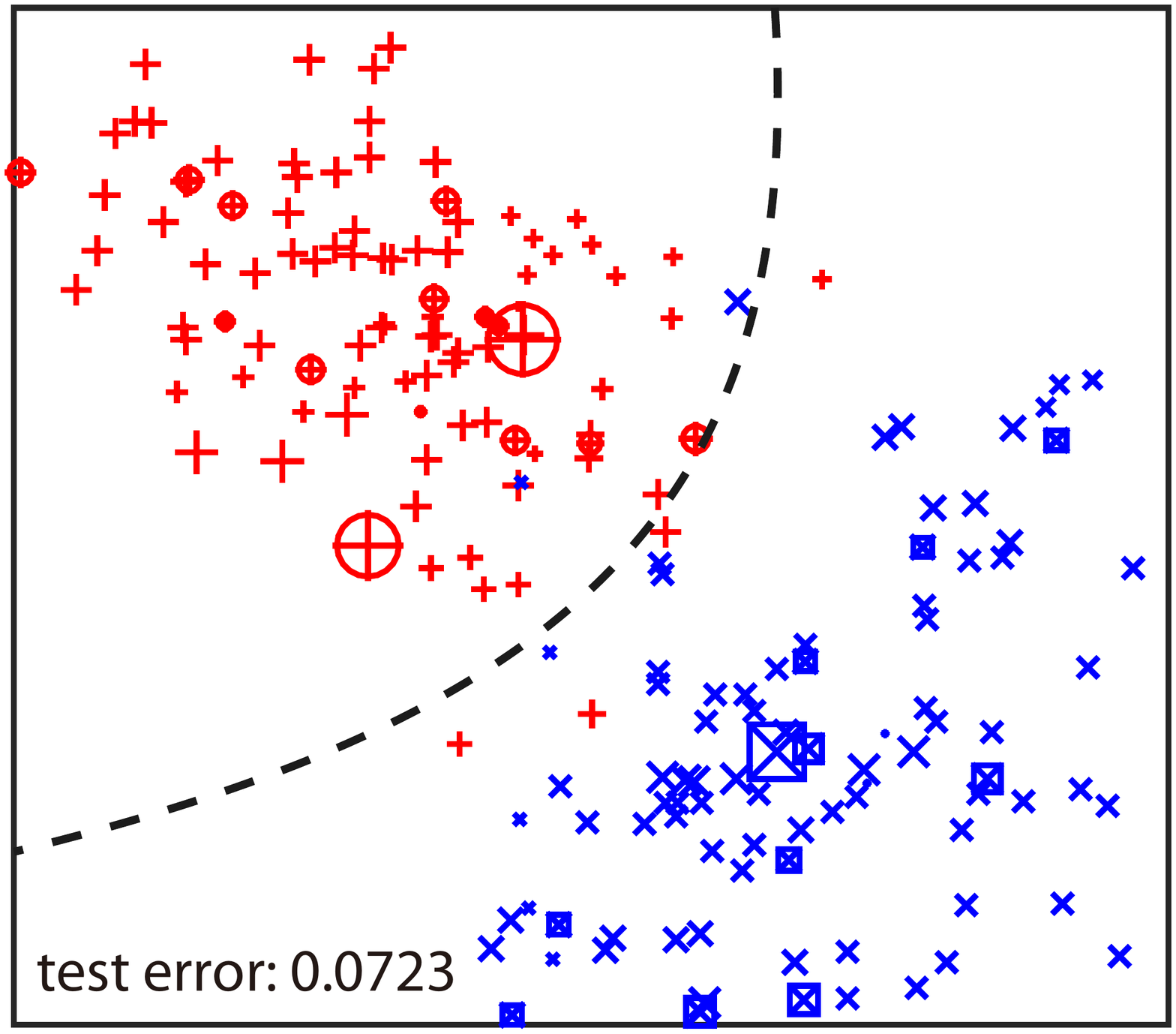}}\\
	\subfigure[RBoost]{
		\label{experiment:RBoost}
		\includegraphics[width=0.45\linewidth]{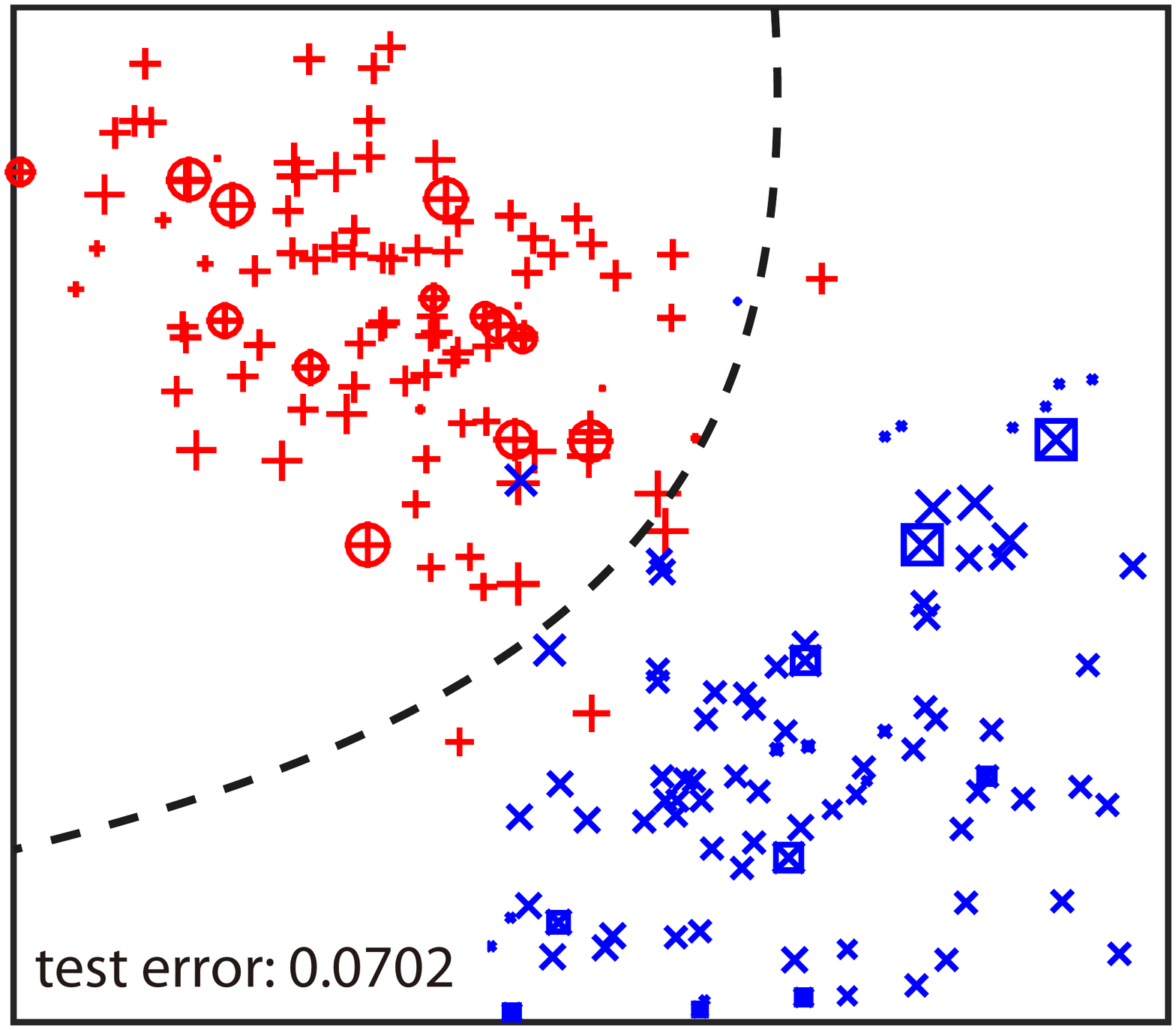}}
	\subfigure[RobustBoost]{
		\label{experiment:RobustBoost}
		\includegraphics[width=0.45\linewidth]{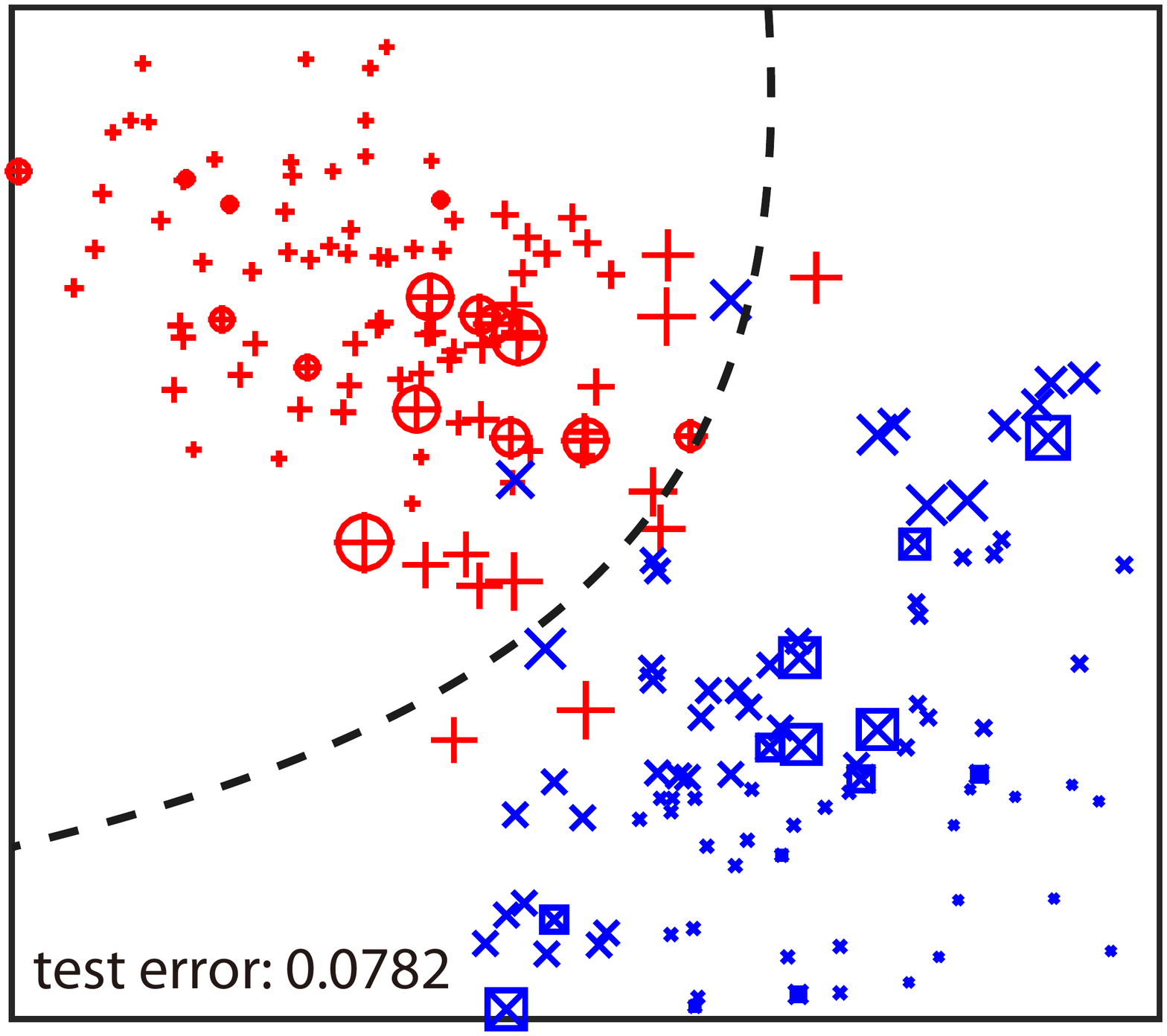}}\\
	\subfigure[SPLBoost with $ \lambda=1.5 $]{
		\label{experiment:SPLBoost with lambda=1.5 }
		\includegraphics[width=0.45\linewidth]{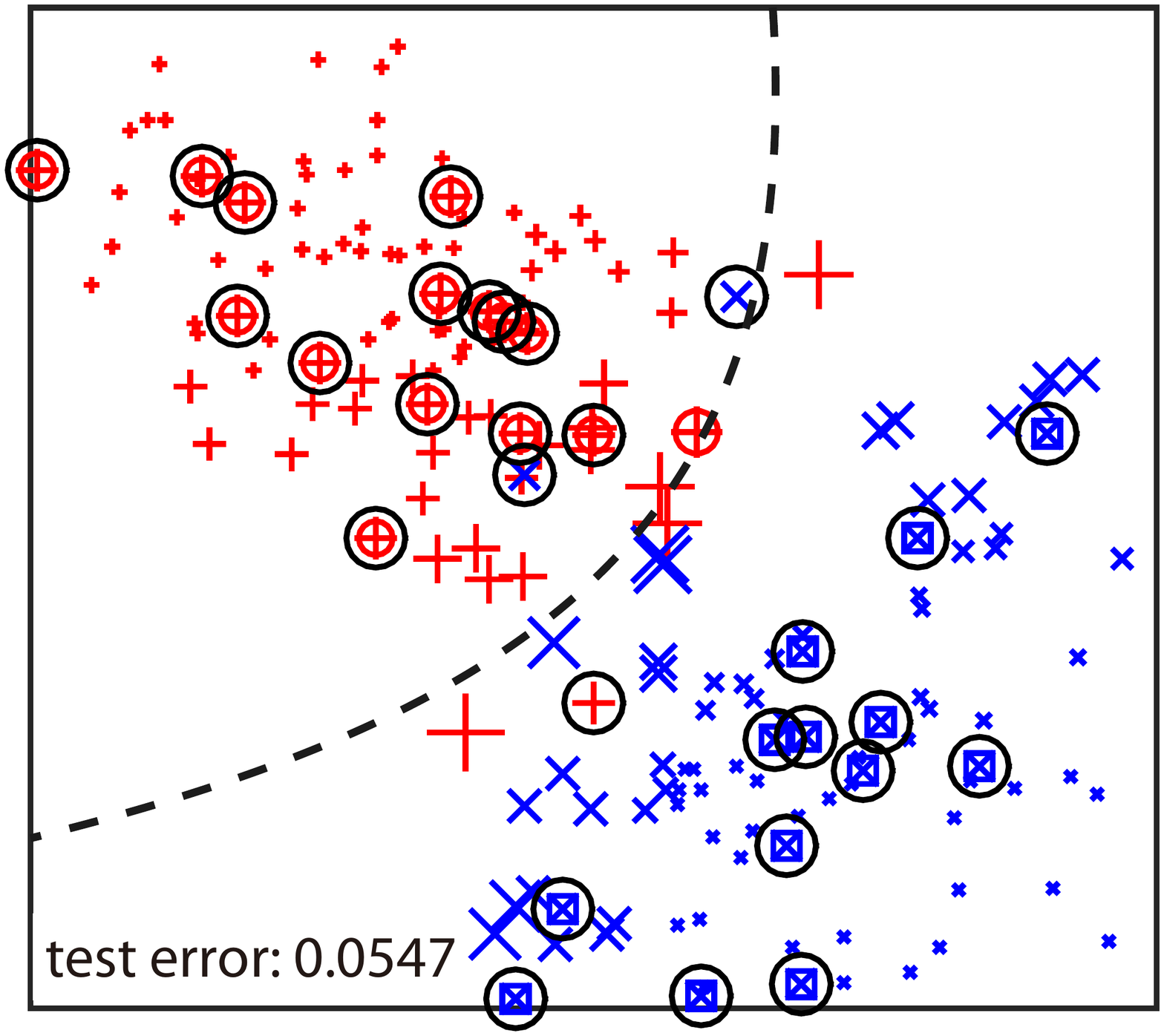}}
	\subfigure[SPLBoost with $ \lambda=2.0 $]{
		\label{experiment:SPLBoost with lambda=2}
		\includegraphics[width=0.45\linewidth]{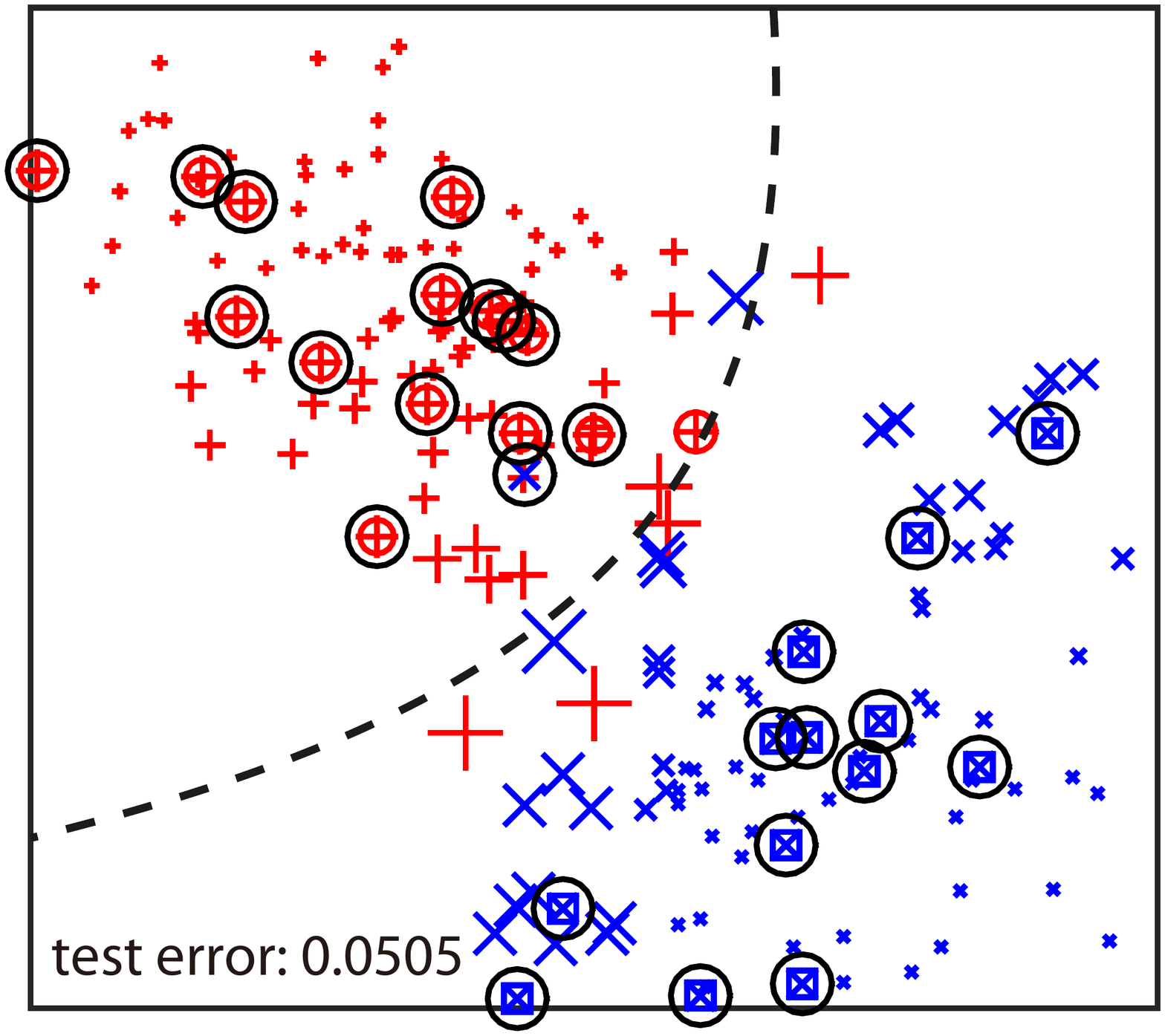}}\\
	\subfigure[SPLBoost with $ \lambda=2.5 $]{
		\label{experiment:SPLBoost with lambda=2.5 }
		\includegraphics[width=0.45\linewidth]{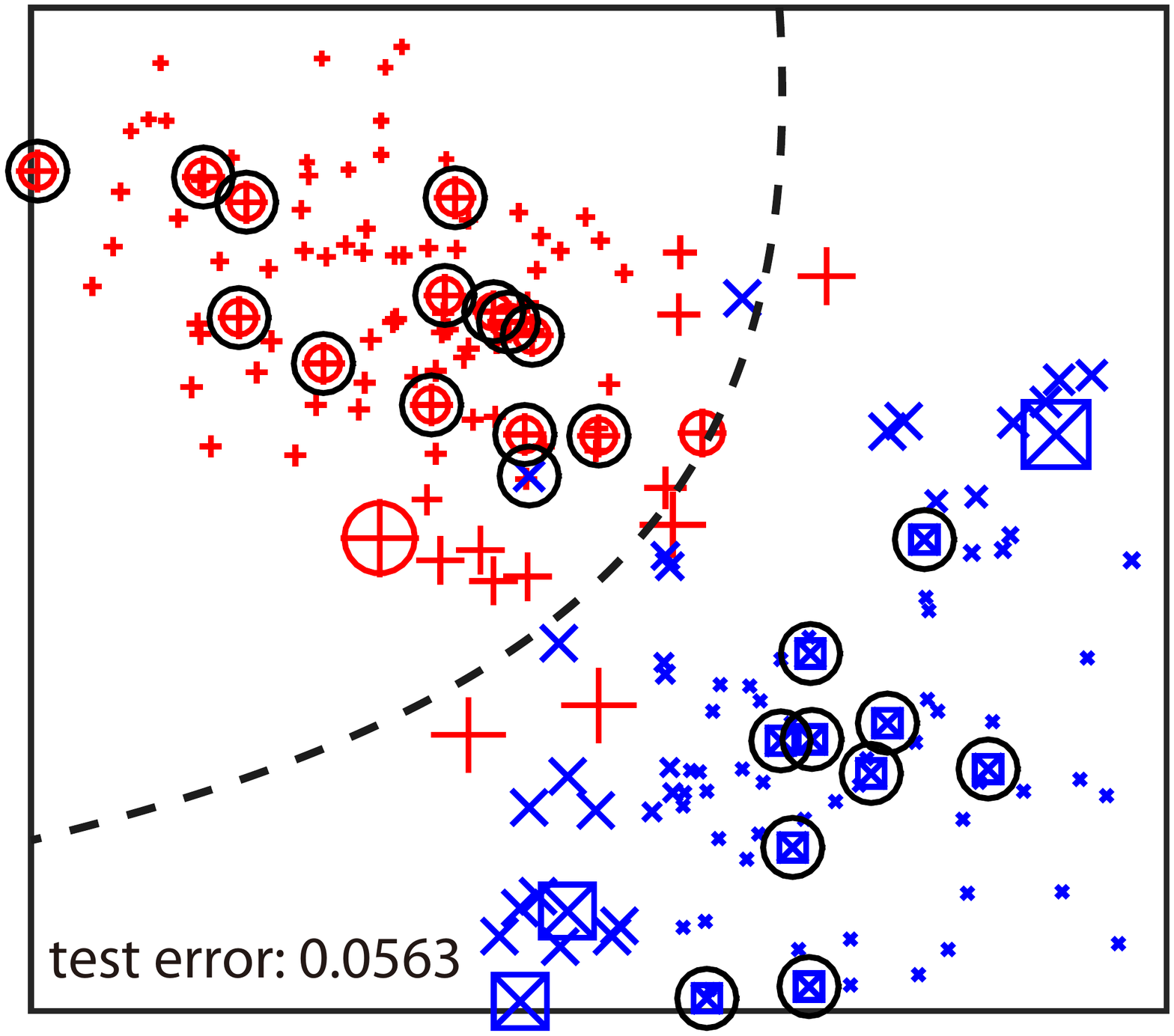}}
	\subfigure[]{
		\label{experiment:legend}
		\includegraphics[width=0.45\linewidth]{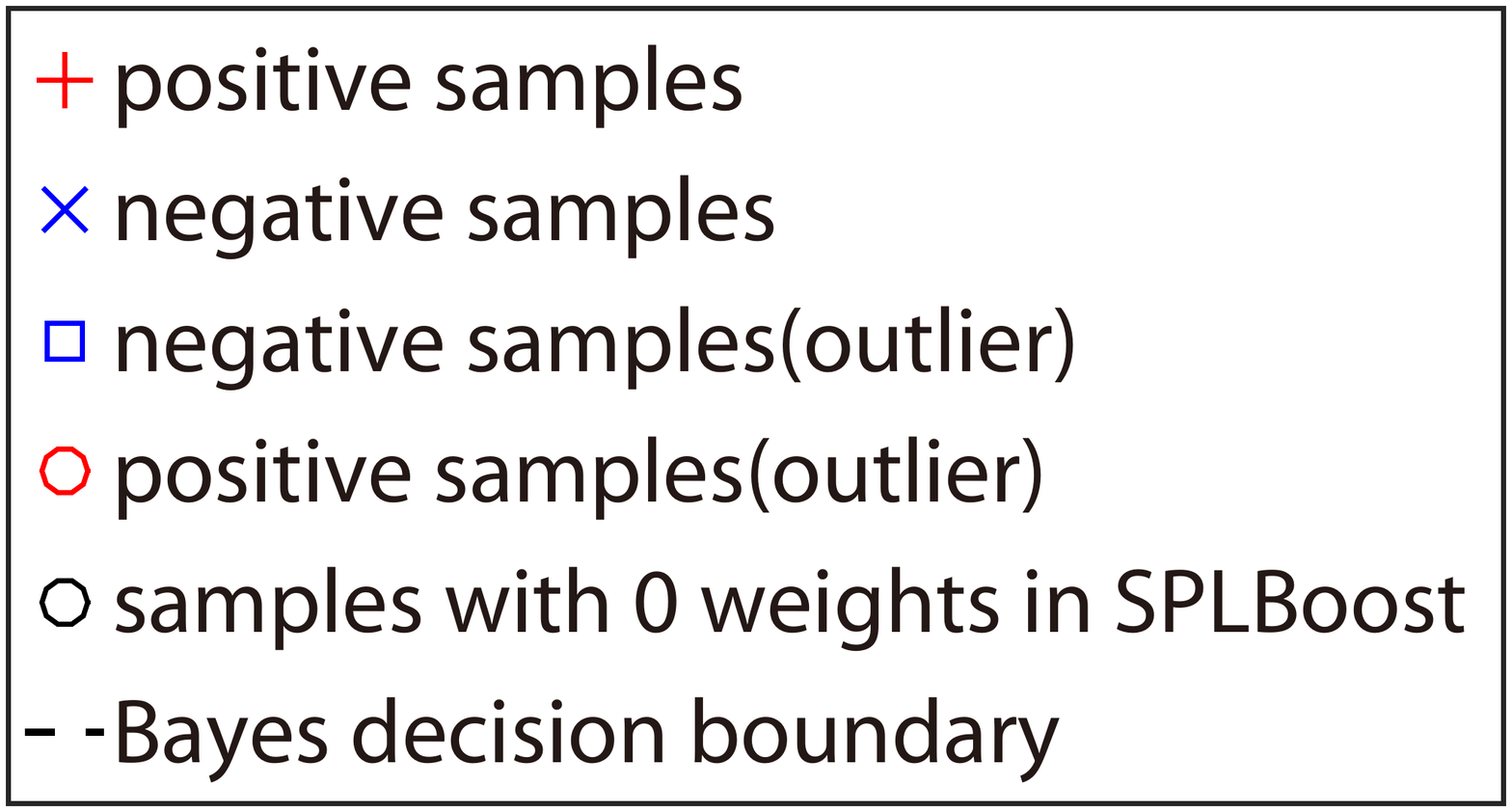}}
	\caption{The optimal Bayes decision boundary and the sample weights of AdaBoost, LogitBoost, SavageBoost, RBoost, RobustBoost and SPLBoost with $ \lambda=1.5, 2.0, 2.5 $, where the SP-regularizer in SPLBoost is Hard Weighting. 
	}\label{experiment:sample weight}
\end{figure}

Fig. \ref{experiment:sample weight} illustrates the performance of
all the competing boosting algorithms.  In all the sub-figures of Fig. \ref{experiment:sample weight}, pluses represent the positive training samples and cross marks represent the negative training samples. Blue squares represent those training samples that are generated from the Gaussian distribution of the negative class, but are labeled as positive samples. They are actually the outliers added to the negative class. In a similar way, red circles represent the outliers that added to the positive class. Black circles represent the training samples whose sample weights are 0 in SPLBoost, i.e., the samples which are considered to be outliers and thus have no effects on the weak learner training in SPLBoost algorithm. To visually observe the distribution of the sample weights of the various boosting algorithms, the sizes of the pluses, cross marks, blue squares and red circles are in proportion to the sample weights of the corresponding training samples. Since that the sample weights of the training samples marked by black circles are 0, the sizes of those black circles are all the same and have no relationship with their weights. As such, one can easily observe the training samples that the boosting algorithms focus on and the samples with 0 sample weights in SPLBoost. 

Basically, several observation can be made from Fig. \ref{experiment:sample weight}. Firstly, in Fig. \ref{experiment:AdaBoost}, most of the points with large sizes are surrounded by red circles or blue squares, which means that most of the training samples with large sample weights are outliers. This reveals that AdaBoost is so sensitive to the outliers. Actually, the exponential loss used by AdaBoost makes it to pay too much attention to the misclassified samples with large margins and those samples are usually outliers.  Secondly, it can be seen from Fig. \ref{experiment:LogitBoost} that the weights of most of the outliers are still larger than that of the correct training samples, though LogitBoost gives gentler weight on such outliers than AdaBoost. This is a common drawback of convex loss functions,  as stated by \cite{long2010random}.  Thirdly, Fig. \ref{experiment:SavageBoost}, Fig. \ref{experiment:RBoost} and Fig. \ref{experiment:RobustBoost} show the performances of SavageBoost, RBoost and RobustBoost respectively, which are more satisfactory than that of AdaBoost and LogitBoost because of their use of non-convex loss functions. In addition, compared with SaveBoost, RBoost makes the sizes of such same outliers are smaller due to the designing of numerical stably solver. It still assigns some relatively large weights to more number of points,  however, which can certainly affect its  performance in practice. As for RobustBoost, the weights of the outliers that near to Bayes decision boundary are still larger than that of other points, which means that RobustBoost is somehow affected by outliers. Fourthly,  based on Fig. 3(g-i), one can see that different $ \lambda $ gives different performance in SPLBoost. Specifically, Compared with Fig. \ref{experiment:SPLBoost with lambda=2}, in  Fig.  \ref{experiment:SPLBoost with lambda=1.5 } where $ \lambda $ is smaller, the algorithm is more "rigorous" to the outliers and more samples are considered to be outliers. On the contrary, in Fig. \ref{experiment:SPLBoost with lambda=2.5 } where $ \lambda $ is larger, the algorithm is more "tolerant" and fewer samples are considered to be outliers. In addition, it is not to see SPLBoost performs much better other competing methods. 

To sum up, compared with other boosting algorithms, SPLBoost gives more reasonable sample weight distribution and as a result can weaken the influences of the outliers in a larger extent.  


\subsection{UCI Data Set}
In this subsection, we shall demonstrate the experiment results in seventeen UCI data sets. And Table \ref{tab1} lists their detailed statistical information.

\begin{table}[t]
	\caption{Statistical Information of the UCI Data Sets}
	\center
	\begin{tabular}{lllllllllll}
		\hline
		Data Set & Sample Number & Feature Number & Percentage of \\ & & & Majority Class\\ \hline
		adult & 48842 & 14 & 75.9  \\ \hline
		bank & 45211  & 17 & 88.5  \\ \hline
		blood & 748 & 4  & 76.2  \\ \hline
		connect-4 & 67557 & 42  & 75.4  \\ \hline
		magic & 19020 & 10  & 64.8  \\ \hline
		miniboone & 130064 & 50  & 71.9  \\ \hline
		ozone & 2536 & 72  & 97.1  \\ \hline
		pima & 768 & 8  & 65.1  \\ \hline
		parkinsons & 195 & 22  & 75.4  \\ \hline
		pb-T-OR-D & 102 & 4  & 86.3  \\ \hline
		ringnorm & 7400 & 20  & 50.5  \\ \hline
		spambase & 4601 & 57  & 60.6  \\ \hline
		titanic & 2201 & 3  & 67.7  \\ \hline
		oocMerl4D & 1022 & 41  & 68.7  \\ \hline
		st-german-credit & 1000 & 24  & 70.0  \\ \hline
		twonorm & 7400 & 20 & 50.0 \\ \hline
		vc-2classes & 310& 6 & 67.7 \\ \hline
	\end{tabular}
	\label{tab1}
\end{table} 

Our experiment settings are as follows. For every data set, we randomly select $ 70\% $ samples as the training data and the rest to be the test data. To evaluate the robustness of those compared boosting algorithms, we randomly select a certain proportion of the training data points and flip their labels. The noise levels are set to be $ 0\% $, $ 5\% $, $ 10\% $, $ 20\% $, $ 30\% $, respectively. The maximum iteration step is set to be 200, and a five fold cross validation procedure is used to determine the appropriate number of iteration steps of those boosting algorithms and some other parameters. As in the last experiment in synthetic Gaussian data set, the weak learner of  AdaBoost, SPLBoost and RobustBoost is chosen as C$ 4.5 $, while for LogitBoost and RBoost,  the regression tree CART is used as the weak learner. To illustrate the robust performance of SPLBoost with different SP-regularizers, we implement SPLBoost with four popular SP-regularizers namely Hard Weighting, linear Soft Weighting, Polynomial Soft Weighting with $ t=1.3$ and $t=4.0$, which are denoted as hard, linear, lowerconvex and upperconvex, respectively.  This procedure is repeated for 50 times and an average of the testing error rates that change with different noise levels for the various boosting algorithms are plotted in Fig.\ref{experiment:UCI data}.

\begin{figure*}[htbp]
	\subfigure{
		\label{UCI experiment:adult}
		\includegraphics[width=0.225\linewidth]{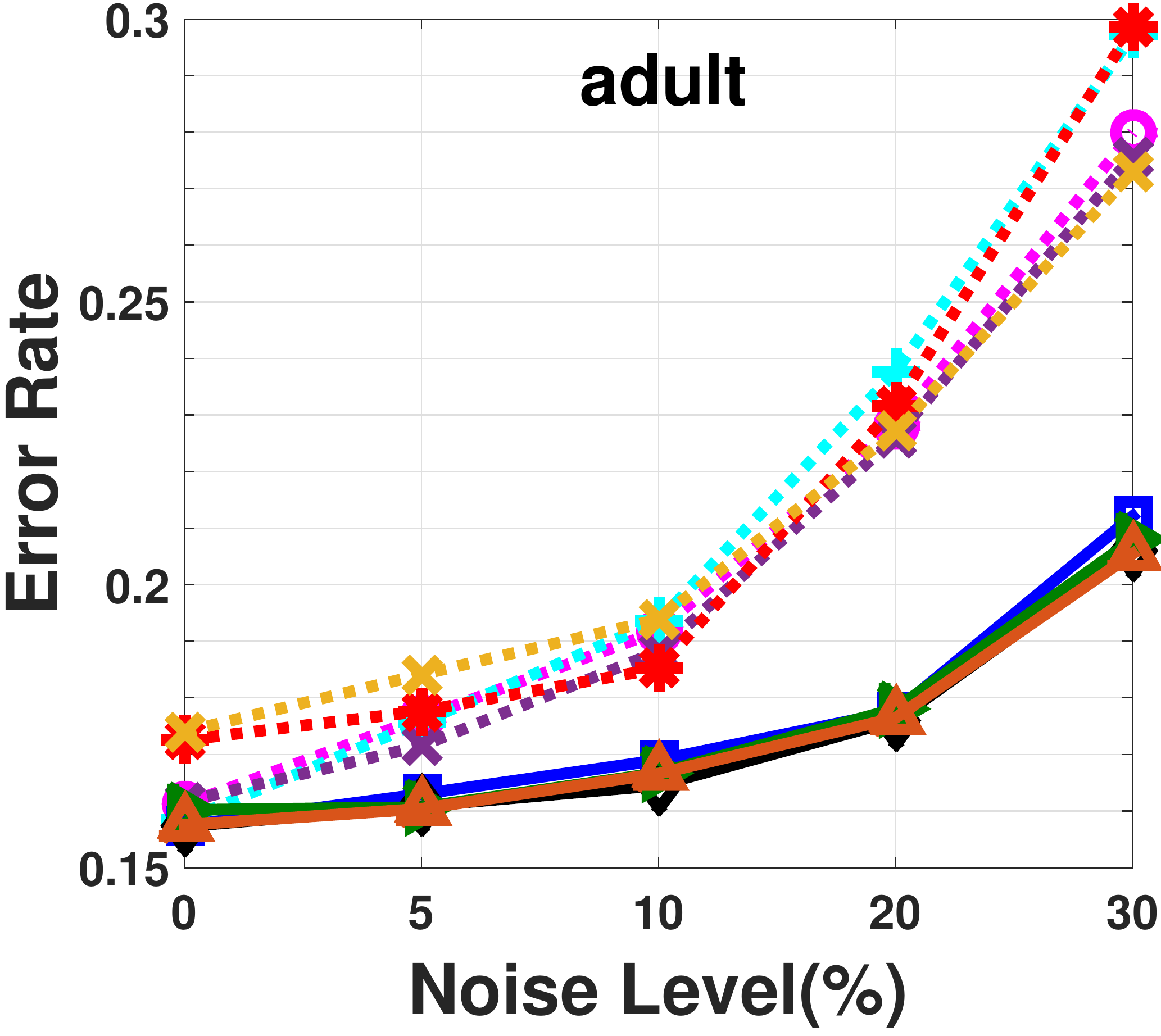}}
	\subfigure{
		\label{UCI experiment:bank}
		\includegraphics[width=0.225\linewidth]{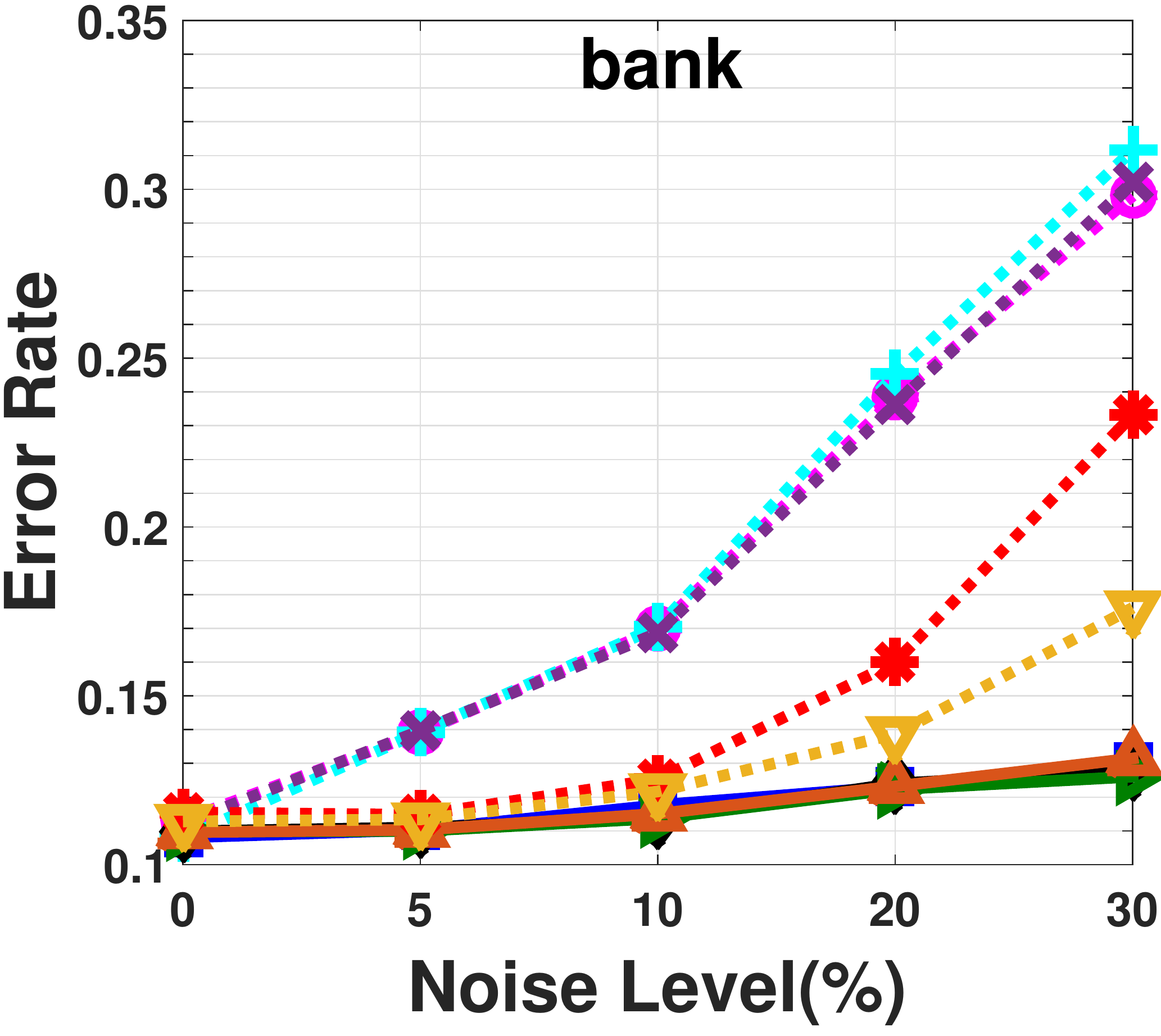}}
	\subfigure{
		\label{UCI experiment:blood}
		\includegraphics[width=0.225\linewidth]{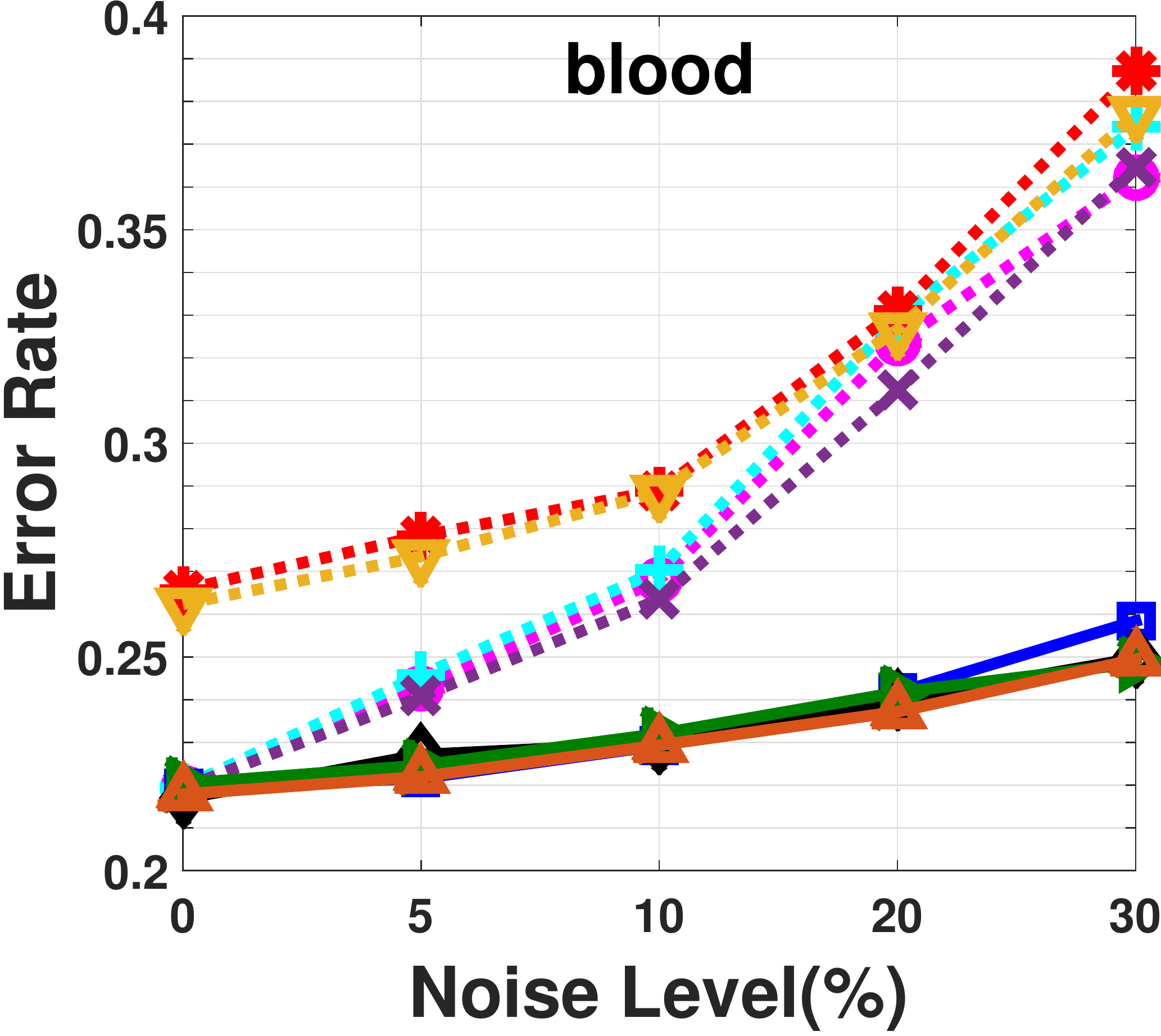}}
	\subfigure{
		\label{UCI experiment:connect_4}
		\includegraphics[width=0.225\linewidth]{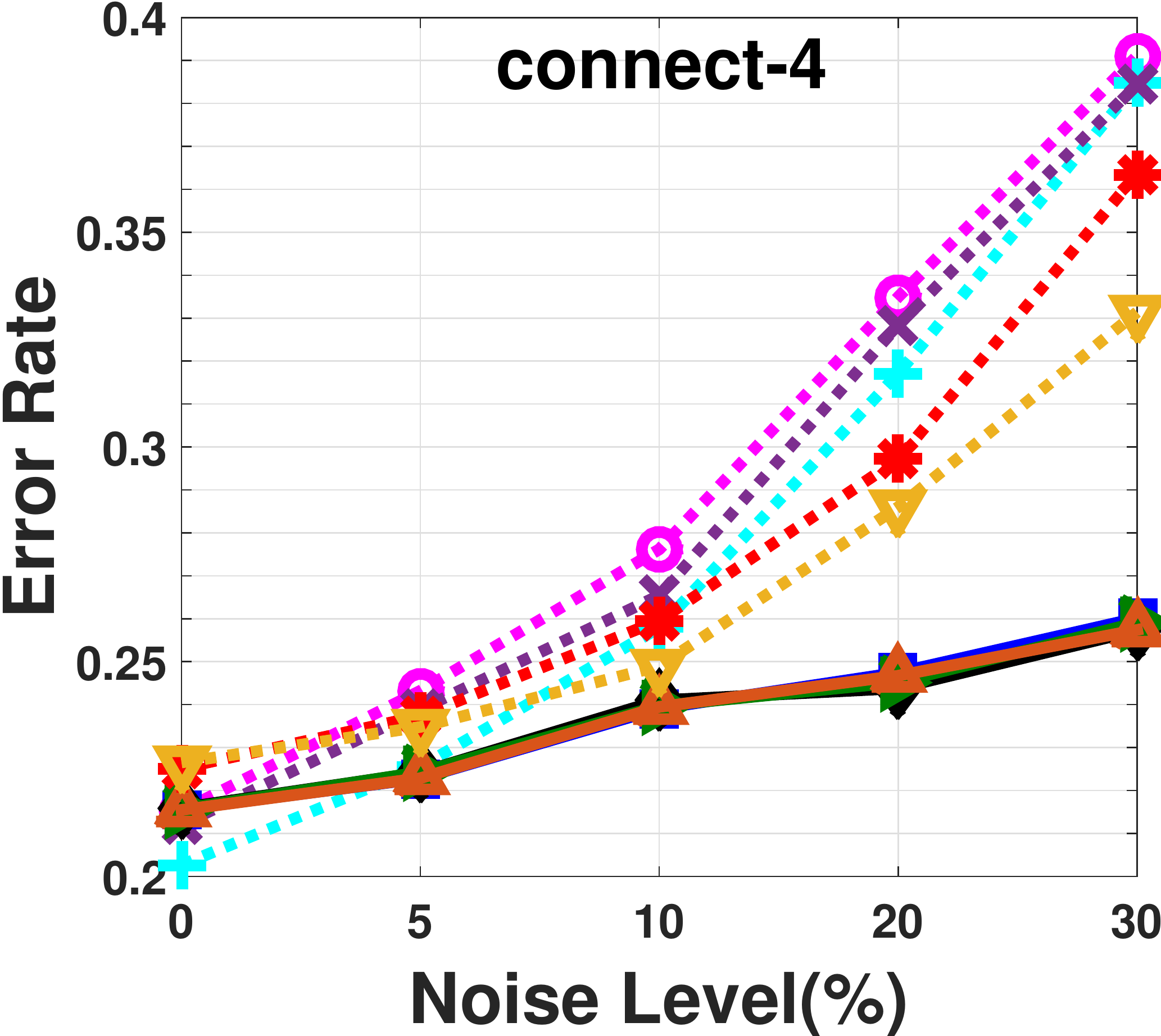}}\\
	\subfigure{
		\label{UCI experiment:magic}
		\includegraphics[width=0.225\linewidth]{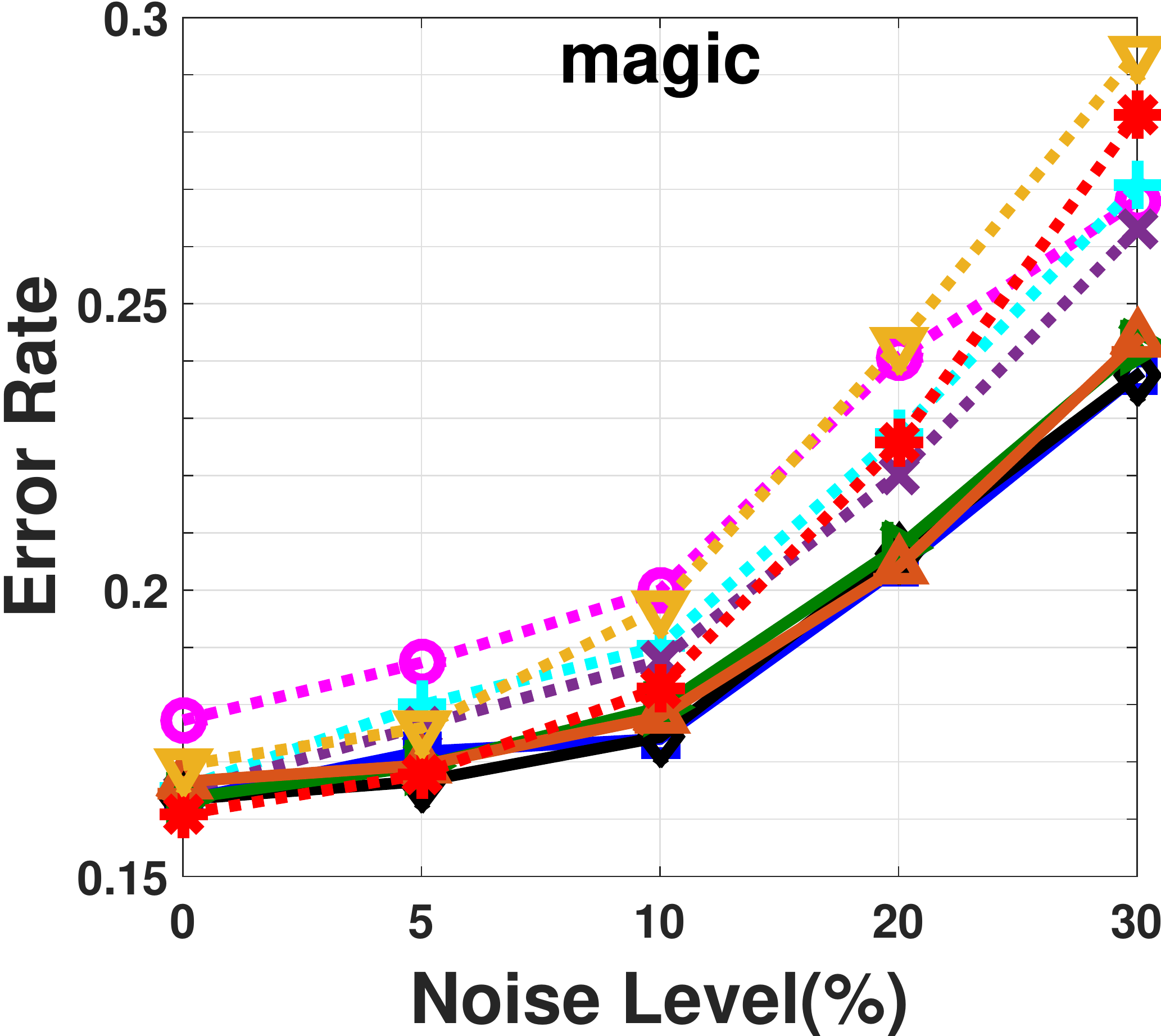}}
	\subfigure{
		\label{UCI experiment:miniboone}
		\includegraphics[width=0.225\linewidth]{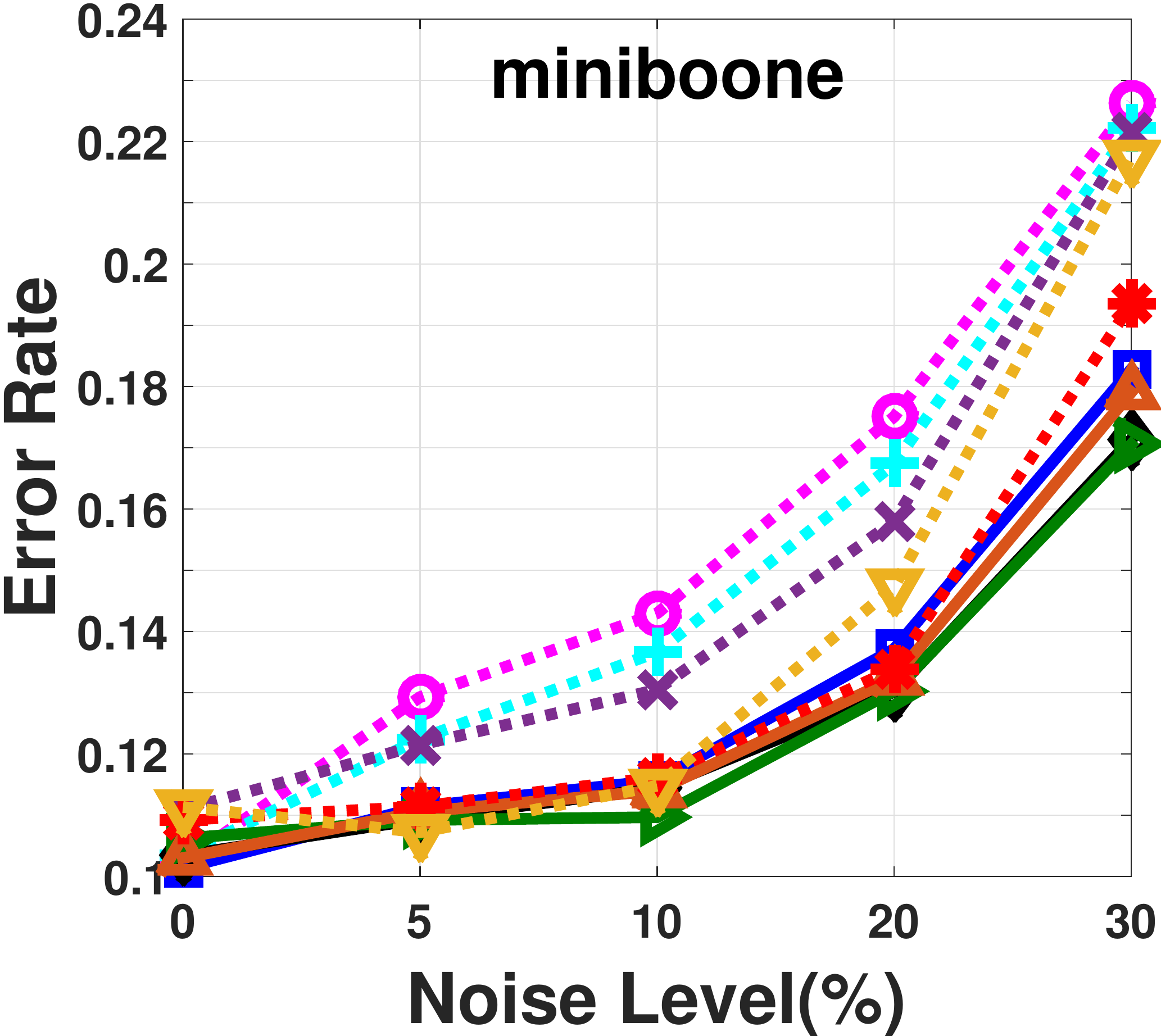}}
	\subfigure{
		\label{UCI experiment:oocMerl4D }
		\includegraphics[width=0.225\linewidth]{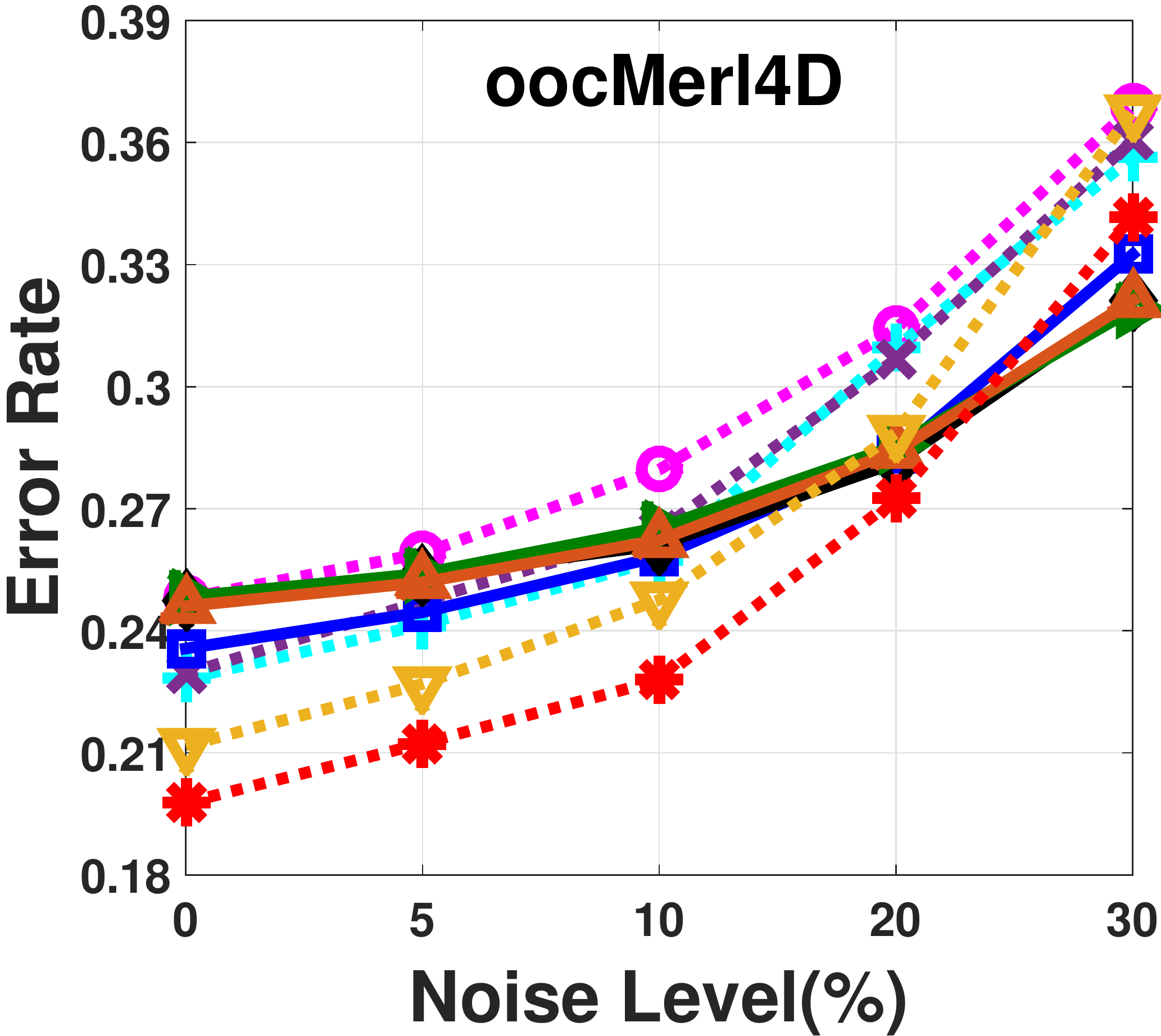}}
	\subfigure{
		\label{UCI experiment:ozone}
		\includegraphics[width=0.225\linewidth]{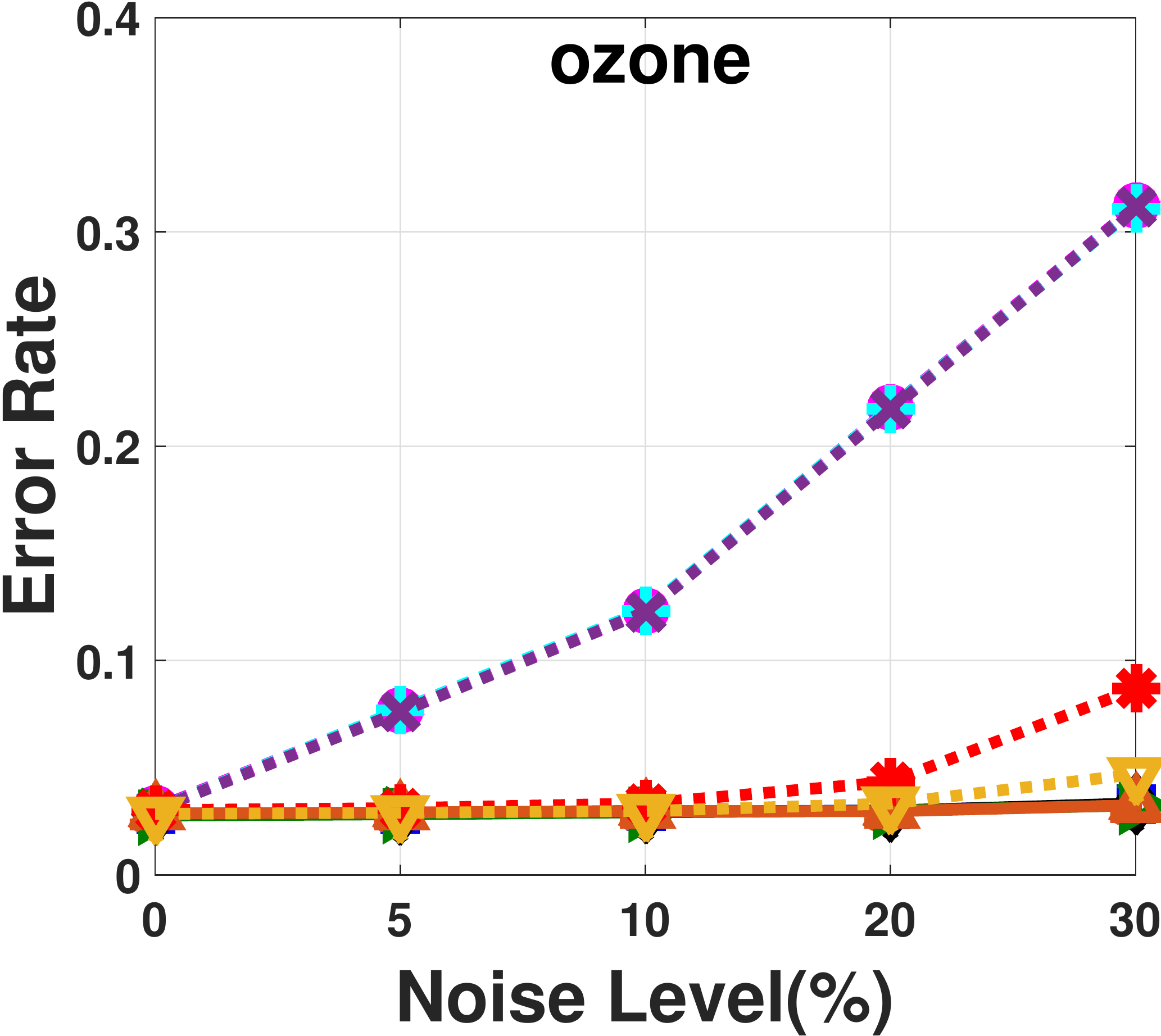}}\\
	\subfigure{
		\label{UCI experiment:parkinsons }
		\includegraphics[width=0.225\linewidth]{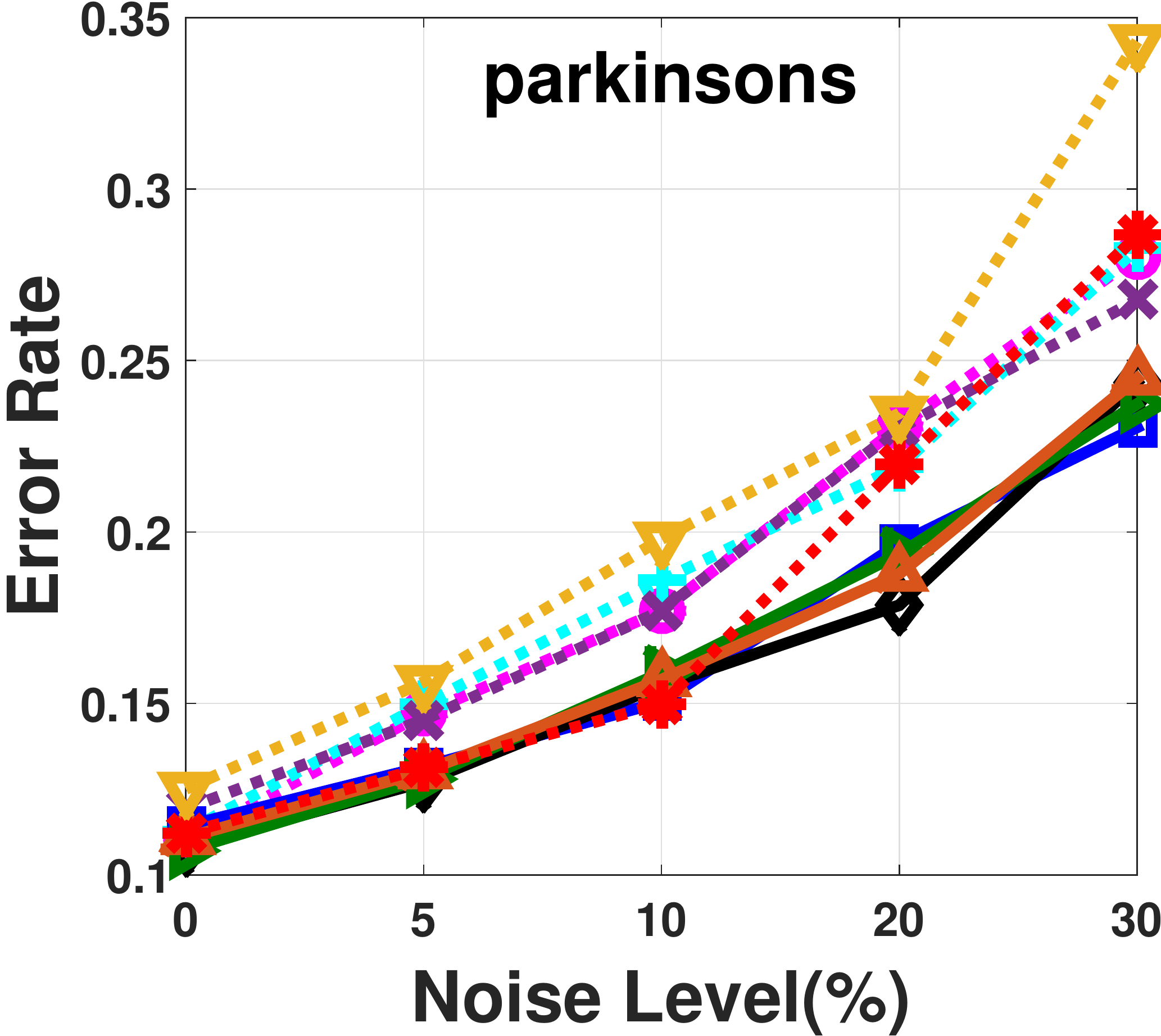}}
	\subfigure{
		\label{UCI experiment:pb_T_OR_D}
		\includegraphics[width=0.225\linewidth]{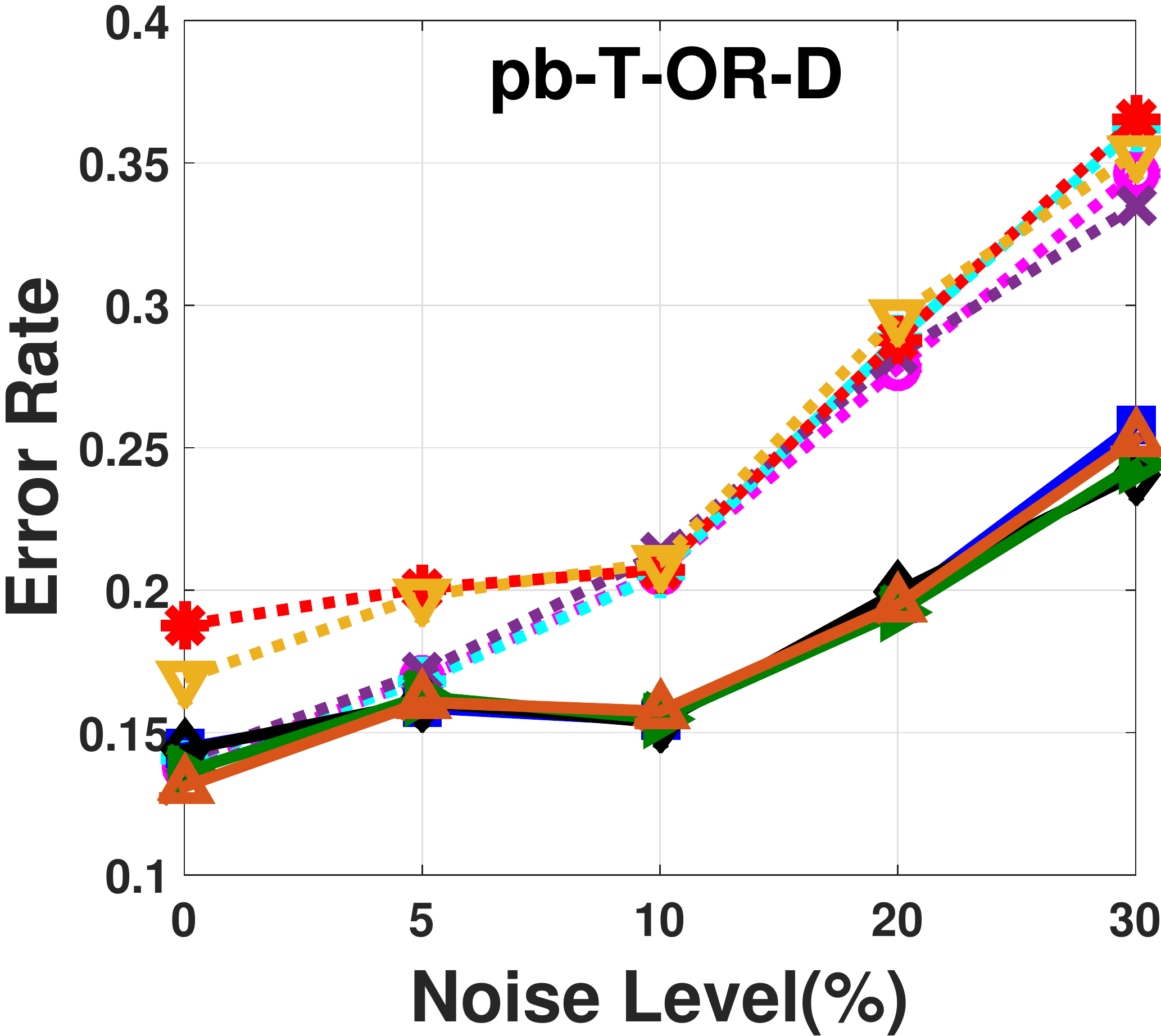}}
	\subfigure{
		\label{UCI experiment:pima2}
		\includegraphics[width=0.225\linewidth]{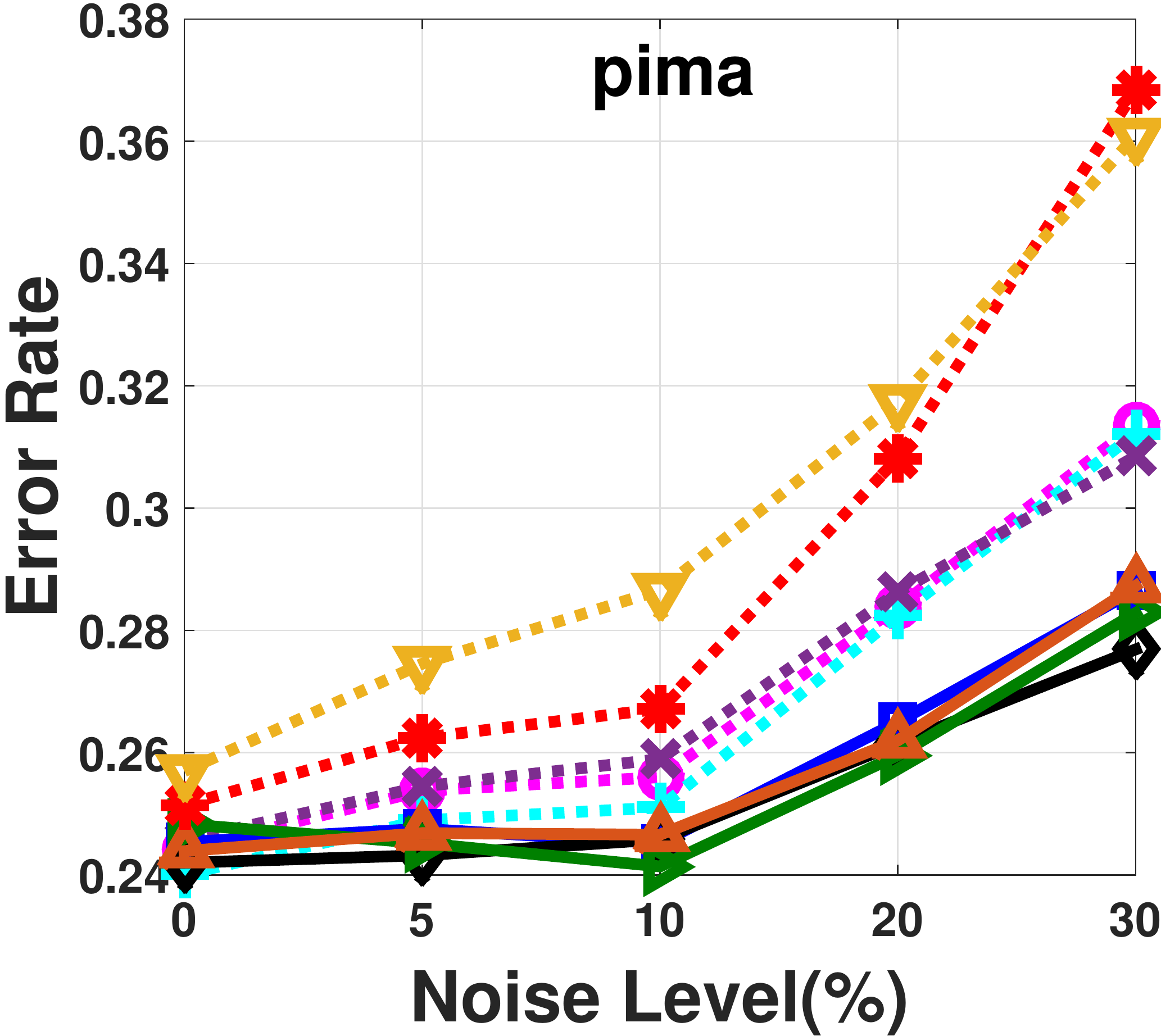}}
	\subfigure{
		\label{UCI experiment:ringnorm}
		\includegraphics[width=0.225\linewidth]{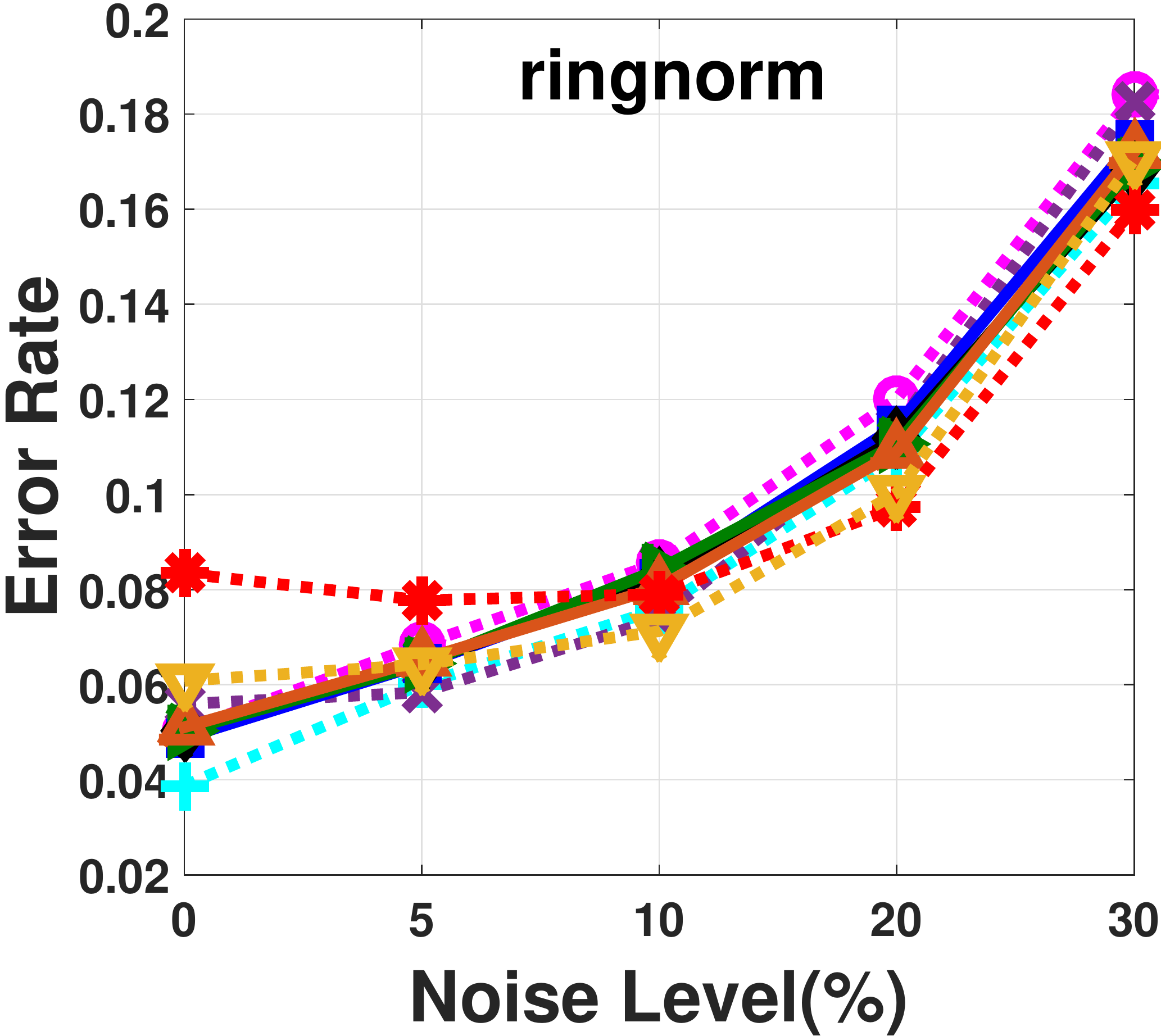}}\\
	\subfigure{
		\label{UCI experiment:spambase}
		\includegraphics[width=0.225\linewidth]{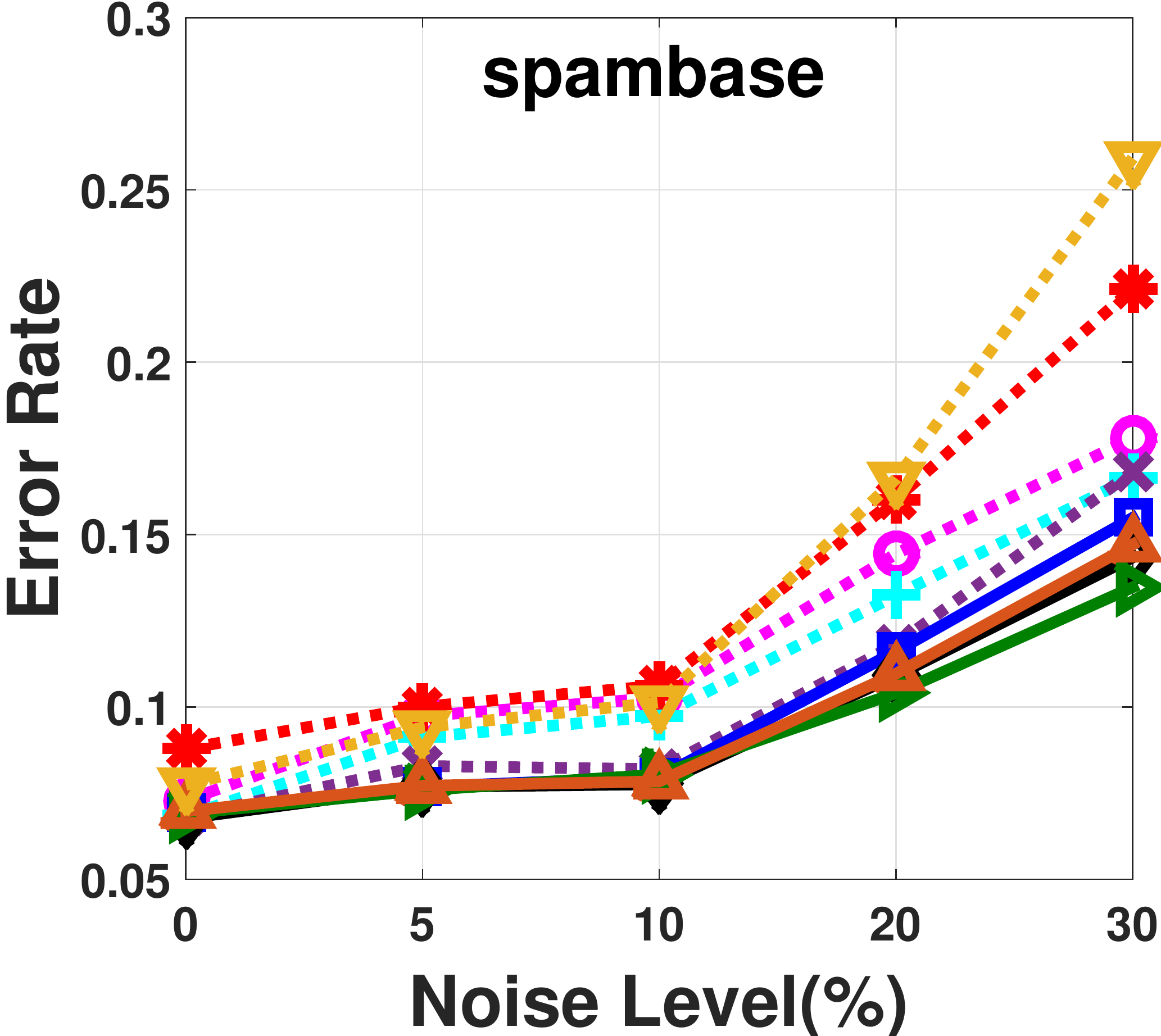}}
	\subfigure{
		\label{UCI experiment:statlog}
		\includegraphics[width=0.225\linewidth]{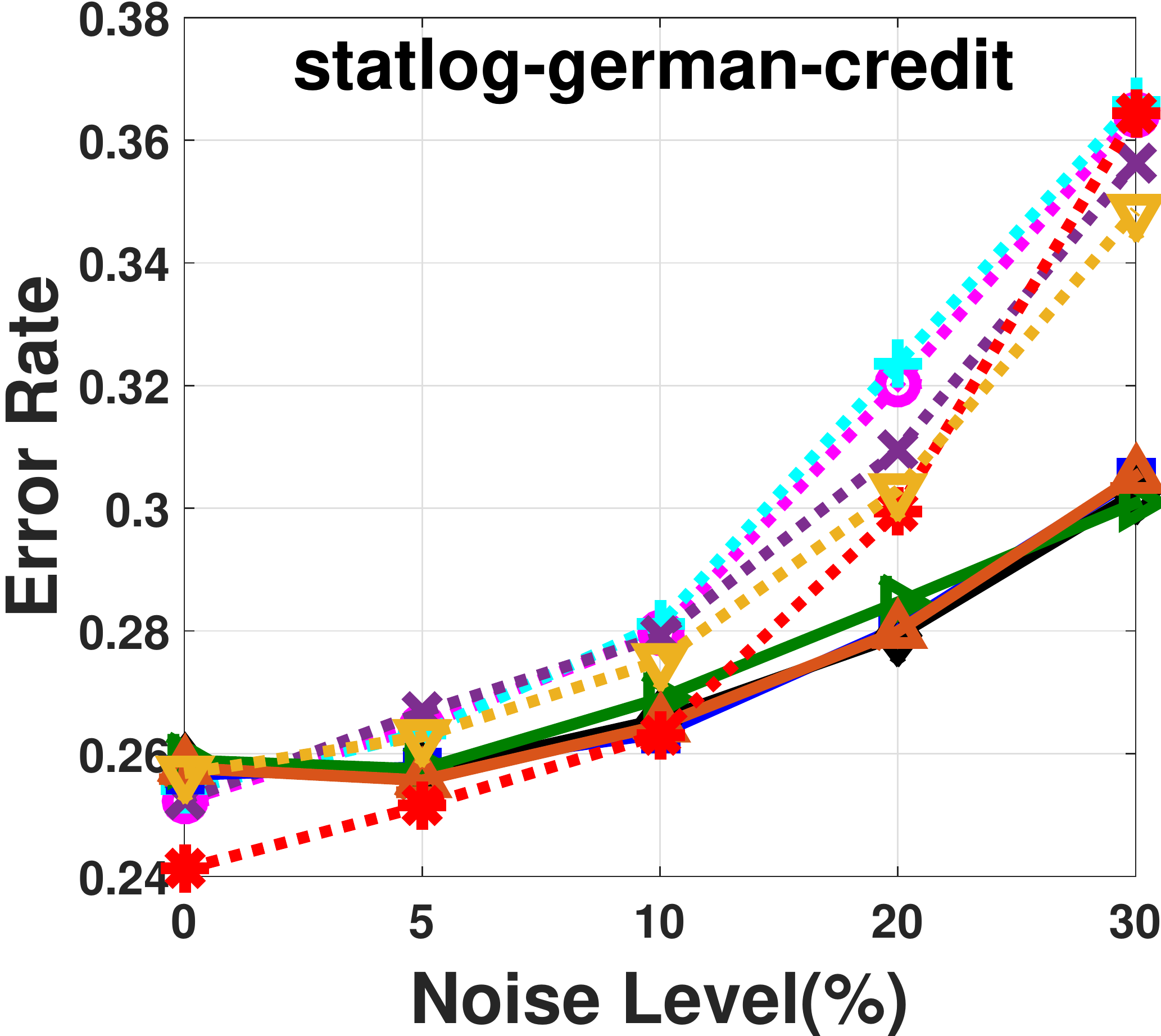}}
	\subfigure{
		\label{UCI experiment:titanic}
		\includegraphics[width=0.225\linewidth]{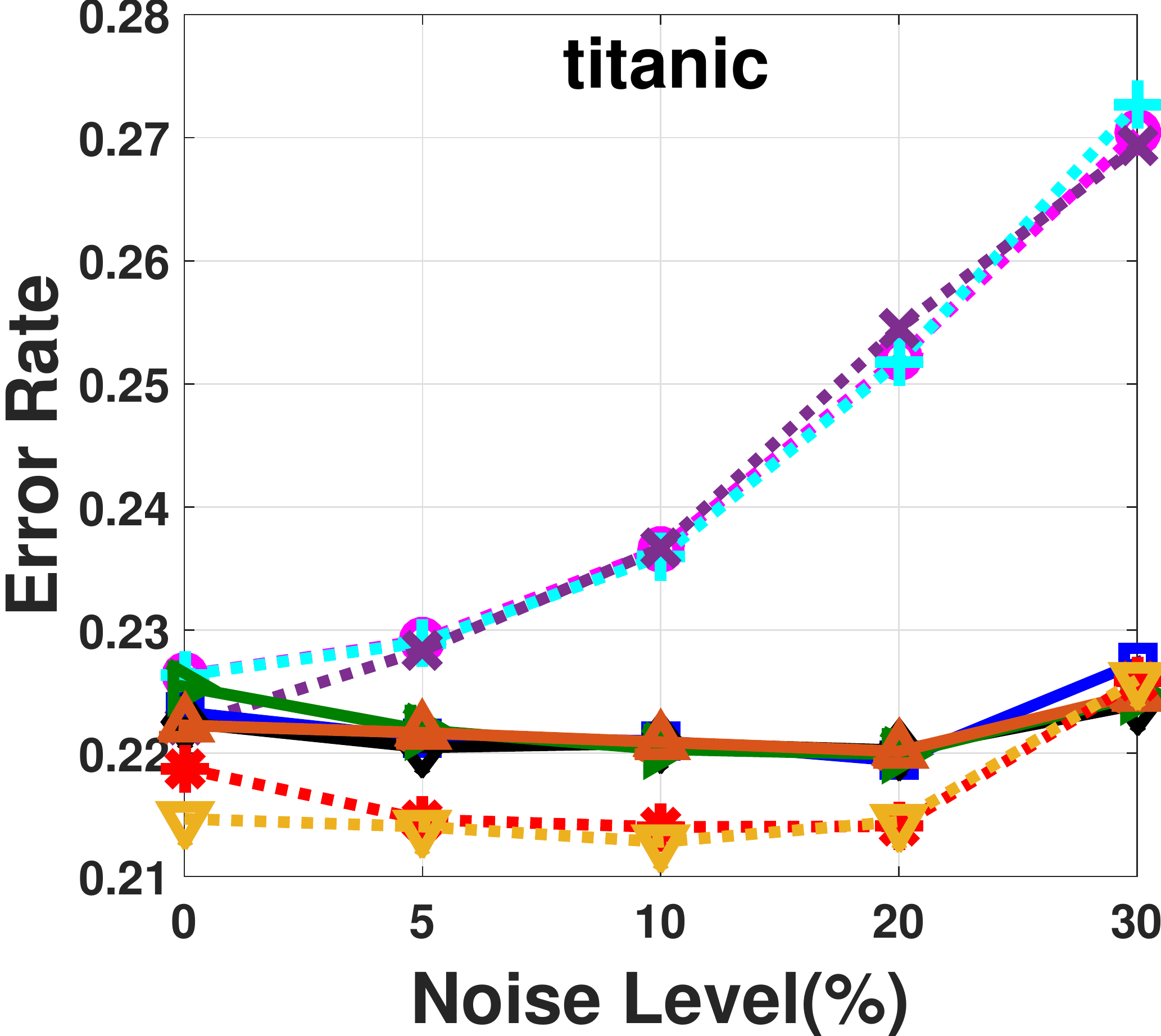}}
	\subfigure{
		\label{UCI experiment:twonorm}
		\includegraphics[width=0.225\linewidth]{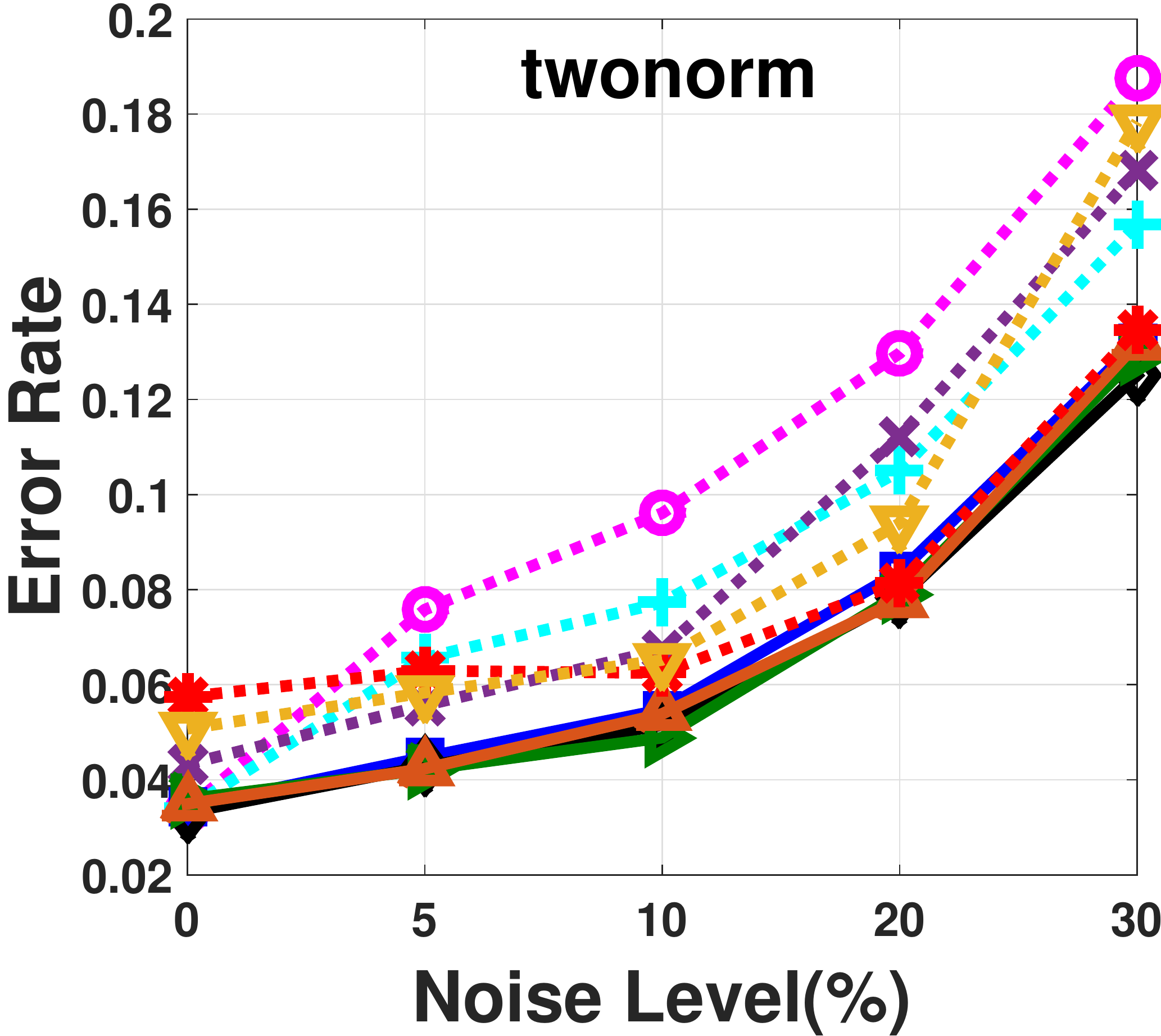}}\\
	\subfigure{
		\label{UCI experiment:vertebral}
		\includegraphics[width=0.225\linewidth]{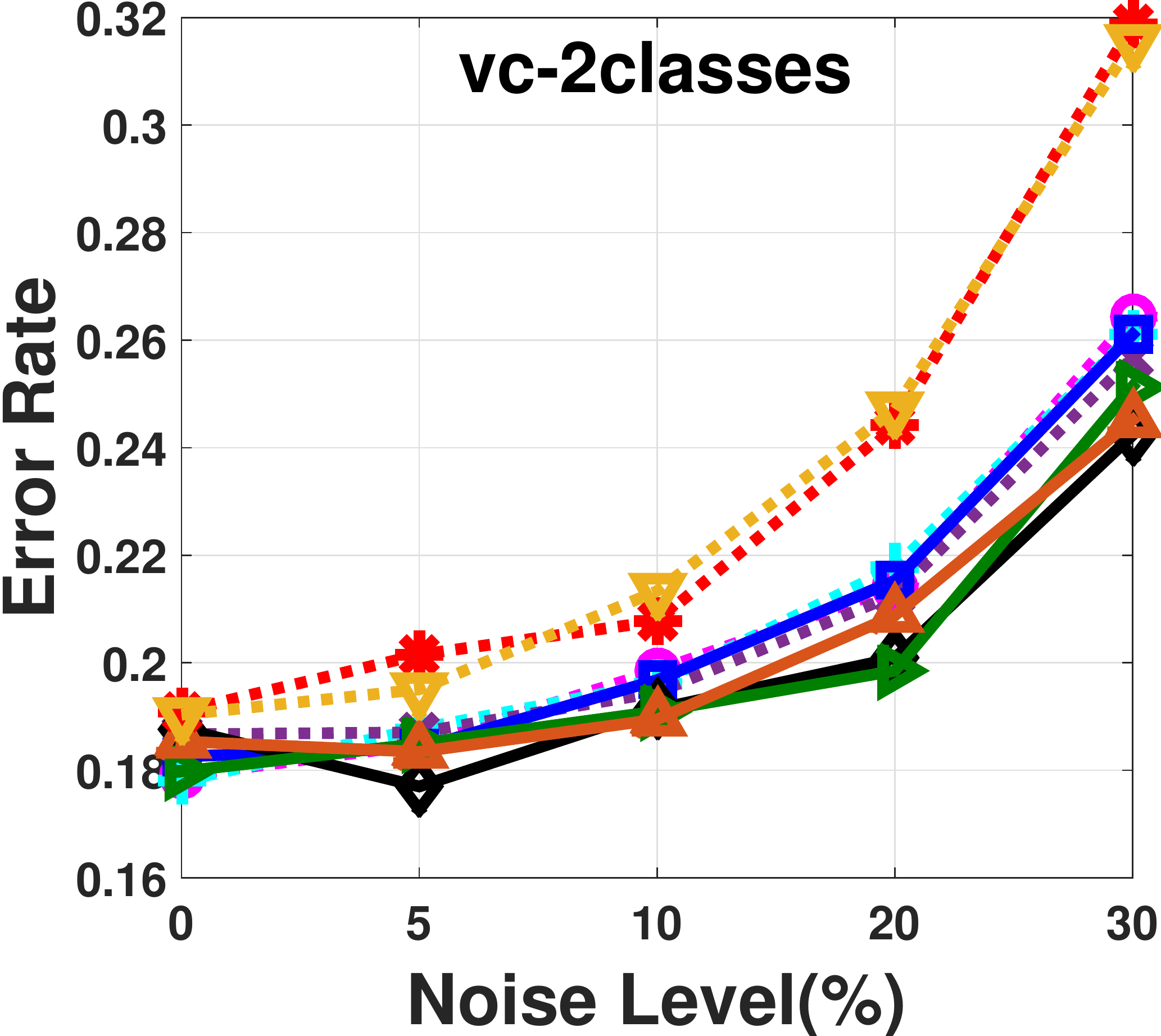}}
	\subfigure{
		\label{UCI experiment:legend2}
		\includegraphics[width=0.225\linewidth]{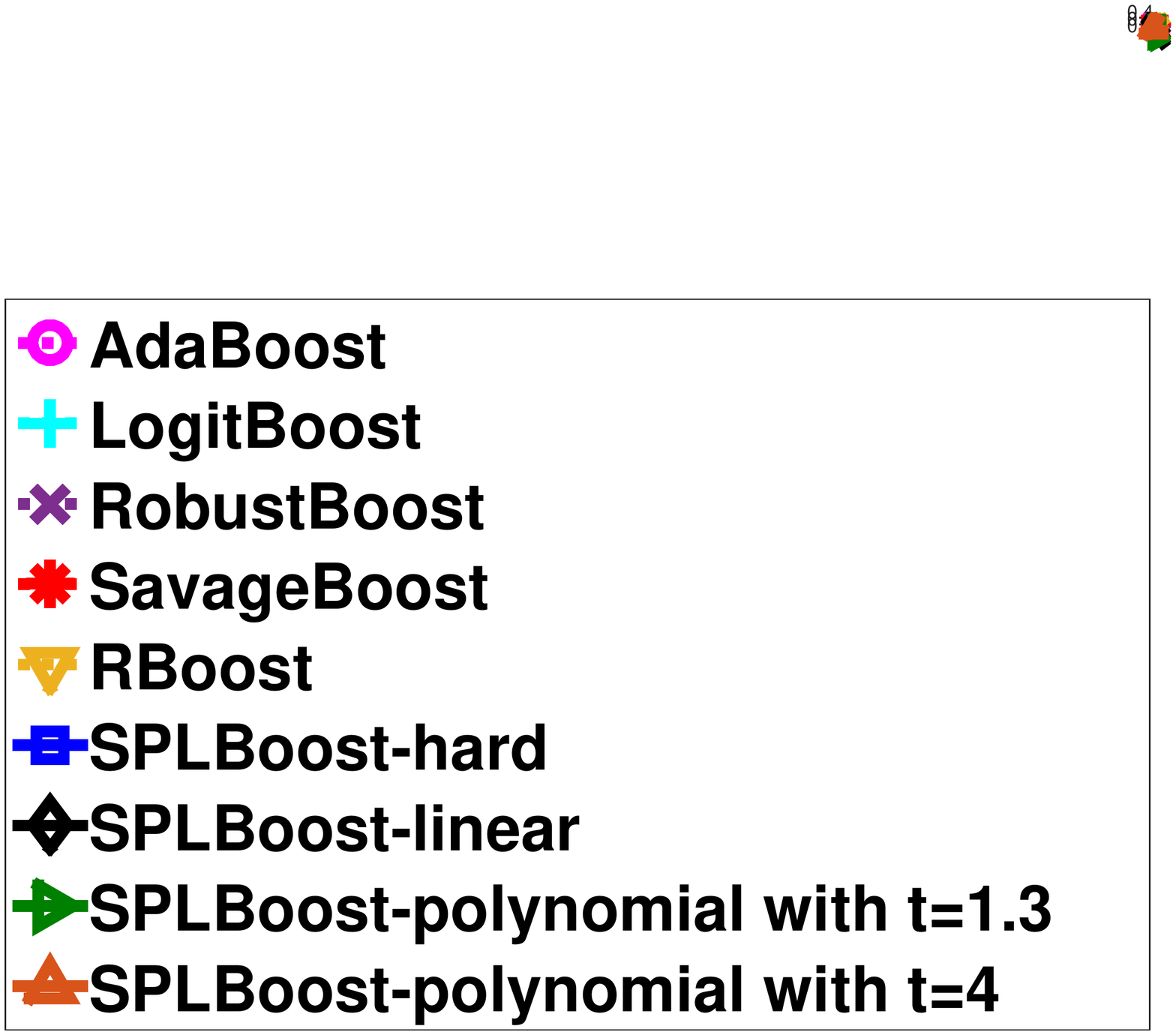}}
	\caption{The testing error rates changing with different noise levels for the various boosting algorithms on seventeen UCI data sets. 
	}\label{experiment:UCI data}
\end{figure*}
\begin{figure*}[htbp] 
	\centering 
	\includegraphics[width=0.9\linewidth]{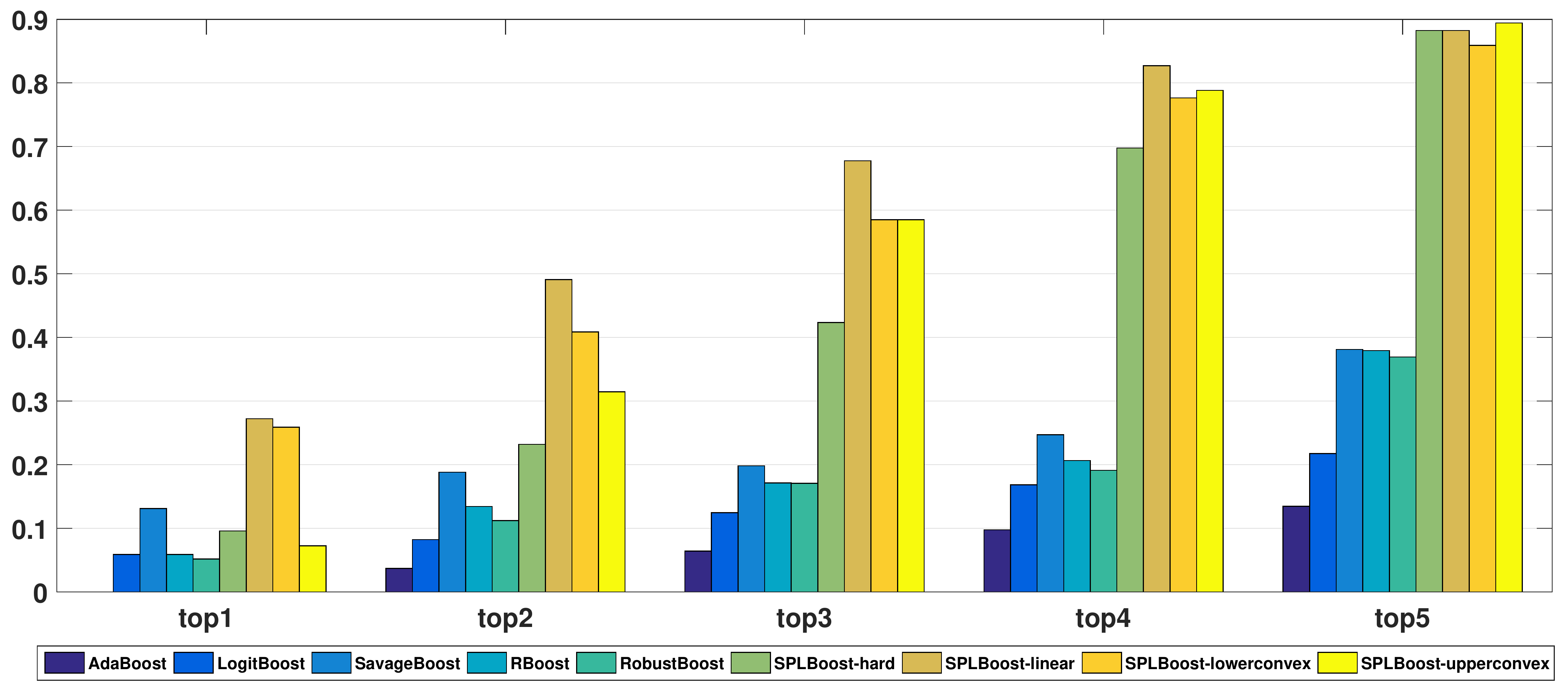}
	\caption{Statistics of rank values on seventeen UCI data sets with five noise levels. 
	}\label{experiment:UCI data statistics}
\end{figure*}

It can be seen from Fig.\ref{experiment:UCI data} that once there are outliers in training data, the performance of AdaBoost can be heavily depraved and is not comparable with LogitBoost, SavageBoost, RBoost, RobustBoost and SPLBoost, which confirms that AdaBoost is very sensitive to the noisy data. We can also see that for most of the data sets, SPLBoost  gives lower test errors than other boosting algorithms, which reveals that SPLBoost has best robustness among all the compared methods. Additionally, it is not hard to see that there are no significantly difference between the performance of the SPLBoost using four different SP-regularizers. 

To better demonstrate the performance of those compared algorithms,  we can rank their performance from 1 (the algorithm with the lowest mean testing error rate) to 9 (the algorithm with the highest mean testing error rate) in those total 85 cases (17 UCI data sets and 5 noise levels). Then we calculates the ratio of the data sets for each boosting algorithm among the top-$n$ ranked ones and the results are summarized in Fig. \ref{experiment:UCI data statistics}. It can be easily seen that in most cases, the performance of SPLBoost is much better than other competing boosting algorithms, which clearly confirms that SPLBoost has better resistance to large noise and outliers. 

\section{Conclusion}
Boosting can be interpreted as a gradient descent technique to minimize an underlying loss function, and the loss function determines the robustness of the algorithm. Convex loss functions such as the exponential loss used by AdaBoost and the logistic loss used by LogitBoost have been proven to be sensitive to the outliers. Non-convex loss functions such as Savage loss used by SavageBoost and Savage2 loss used by RBoost have illustrated  superior robustness over popular convex losses, however, solving the non-convex optimization problem derived from non-convex losses is not an easy task.

In this paper, instead of designing new loss function, we combine the classical Discrete AdaBoost algorithm with self-paced learning regime, that is, a robust algorithm framework which has been attracting troumendous attention in machine learning and computer vision. Thus,  we come up with a new robust boosting algorithm named SPLBoost. Experiments shows that SPLBoost could have a superior  performance over other popular ones once outliers exist in training data.

However, there are still some interesting works need to be done in the future. On one hand, it is not hard to see that the SPLBoost can be treated as a general framework to improve the robustness of various boosting algorithms besides AdaBoost. As such,  one can try some other popular boosting algorithms such as LogitBoost, $L_2$Boost to get better performance. On the other hand, although we have proven the equivalence between the SPLBoost and a MM algorithm implemented on a latent non-convex objective function, the detailed theoretical properties of SPLBoost, including consistency,
convergence rate, and error bound, are needed to be further investigated.  

\section*{Acknowledgment}

This work was supported by the National Natural Science Foundation of China (Grant Nos. 11501440, 61303168, 61333019 and 61373114).


%
\small
\bibliographystyle{unsrt}
\bibliography{SPLBoost_ref_final}





\ifCLASSOPTIONcaptionsoff
  \newpage
\fi

\end{document}